\documentclass[journal,twocolumn]{IEEEtran}

\usepackage{amsmath,amssymb,amsfonts}
\usepackage{amsthm}
\usepackage{mathtools}
\usepackage{graphicx}
\usepackage{subcaption}
\usepackage{cite}
\usepackage{algorithm, algorithmic}
\usepackage{booktabs}
\usepackage{hyperref}
\usepackage{color}
\usepackage{mdframed}
\usepackage{tikz}

\theoremstyle{plain}
\newtheorem{theorem}{Theorem}

\newtheorem{proposition}{Proposition}
\newtheorem{corollary}{Corollary} 

\newtheorem{assumption}{Assumption}

\theoremstyle{definition}

\theoremstyle{remark}

\begin{document}

\title{Adaptive Measurement Allocation for Learning Kernelized SVMs Under Noisy Observations}



\author{
Artur Miroszewski\\
$\Phi$-lab, European Space Agency (ESA/ESRIN), Frascati, Italy\\
\texttt{artur.miroszewski@esa.int}
}

\maketitle

\begin{abstract}
Kernel methods are typically formulated under the assumption of exact, noise-free access to the Gram matrix. However, in emerging settings each kernel entry must be inferred from noisy observations, and its accuracy depends on how a limited measurement budget is allocated. Despite this, existing approaches overwhelmingly rely on uniform allocation, which equalizes estimator variance but ignores the highly non-uniform dependence of kernelized classifiers on the Gram matrix.

In this work, we formulate measurement allocation for noisy kernel estimation as a task-aware optimization problem tailored to kernelized Support Vector Machines (SVMs). We derive a variance-aware allocation framework that combines classifier sensitivity with estimator uncertainty, leading to a Neyman-type allocation rule for measurement-based kernels and a Bernoulli specialization relevant to quantum kernel estimation. Building on this analysis, we develop an adaptive measurement allocation strategy that combines margin sensitivity and active set instability, concentrating measurements on the most classifier-relevant regions of the kernel matrix.

Theoretical analysis reveals distinct allocation regimes governed by the heterogeneity of the induced allocation weights, identifying conditions under which adaptive or uniform strategies are preferable. Experiments on synthetic and quantum-kernel datasets demonstrate improved classifier fidelity relative to uniform allocation, while a dual coefficient stability criterion enables substantial measurement savings through early stopping. Together, these results establish adaptive measurement allocation as an effective alternative to uniform sampling for learning with noisy kernels, improving both predictive accuracy and measurement efficiency.
\end{abstract}

\begin{IEEEkeywords}
Kernel methods, SVM, adaptive sampling, quantum machine learning, noisy kernels.
\end{IEEEkeywords}

\section{Introduction}
\IEEEPARstart{K}{ernel} methods \cite{hofmann2008kernel} are usually presented in a regime where the kernel matrix $K$ is available exactly, up to deterministic numerical precision.
In classic settings, each kernel entry $K_{ij} = k(x_i, x_j)$ is computed by evaluating a closed‑form function, and its value is treated as noise‑free. 
However, a growing number of modern applications depart fundamentally from this assumption. 
In such settings, the accuracy of $K_{ij}$ is determined by the number of measurements allocated to its estimation.
A prominent example arises in quantum machine learning \cite{havlivcek2019supervised, schuld2019quantum}, where kernel entries are estimated from repeated Bernoulli measurements.
Each evaluation of $K_{ij}$ is an empirical, sequential random process, and reducing the variance requires allocating more measurement budget.
These challenges become even more pronounced in high-dimensional quantum models, where phenomena such as barren plateaus \cite{mccleanBarrenPlateausQuantum2018} and kernel concentration \cite{thanasilpExponentialConcentrationQuantum2024} can significantly increase the resources required for reliable training \cite{miroszewski2024searchquantumadvantageestimating}.

\begin{figure}[t!]
    \centering
    \includegraphics[width=0.9\linewidth]{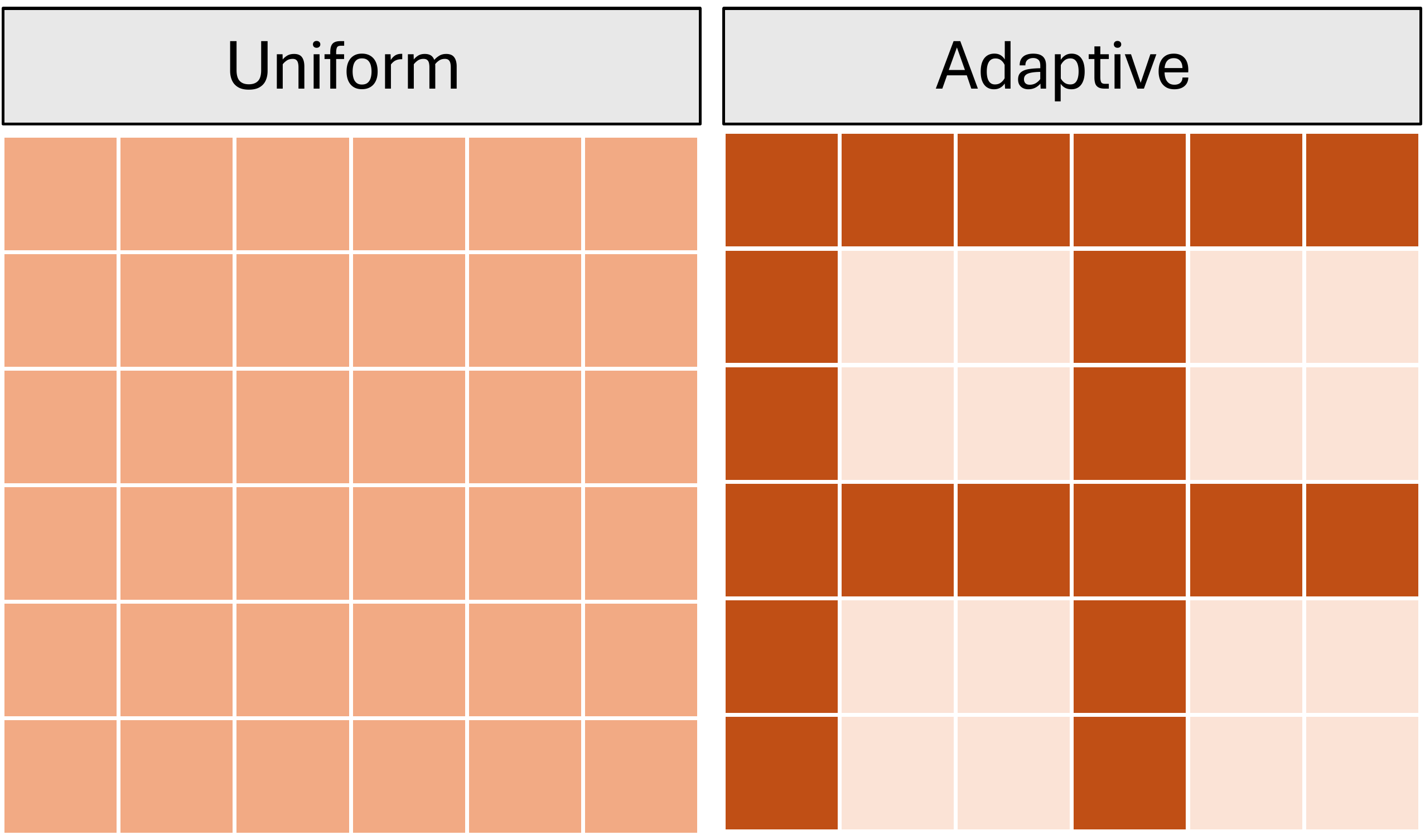}
    \caption{Conceptual illustration of measurement allocation strategies. The uniform scheme distributes measurements evenly across all kernel entries, while our adaptive approach concentrates them in the most influential regions of the kernel, typically involving support vectors and near‑margin points, yielding a classifier that more closely matches the true‑kernel SVM under the same or smaller measurement budget.}
    \label{fig:general_idea}
\end{figure}

Yet, measurement resources are inevitably limited. 
This creates an unavoidable trade‑off: with a fixed global budget, one must decide how to allocate measurements across the independent kernel entries in order to obtain the most accurate downstream machine learning model.
Although quantum kernels provide a main motivating example, the problem studied in this work is not specific to quantum computing. More generally, kernel values may be unavailable in closed form, expensive to compute exactly, or accessible only through finite stochastic estimation procedures. Classical examples include Monte-Carlo kernel approximations and random-feature methods, where kernels are represented as expectations over randomized feature maps and approximated by empirical averages. Random Fourier Features, for instance, approximate shift-invariant kernels using Monte-Carlo samples from the spectral representation of the kernel, with the approximation quality improving as the number of sampled features increases \cite{rahimi2007random,liu2021random}. Thus, quantum kernel estimation can be viewed as one instance of a broader class of measurement-based kernel-learning problems, in which limited estimation resources must be allocated across kernel entries in a way that is useful for the downstream learning task.

The standard approach in the literature is uniform sampling, where each kernel entry receives the same number of measurements. 
Uniform allocation is conceptually simple and symmetric, with well-understood variance properties: it minimizes the maximum variance among the entries and equalizes the per‑entry mean‑square error. 
It also ensures that the Frobenius‑norm error is evenly spread across the Gram matrix and avoids introducing structural bias. 
However, uniform allocation is task‑agnostic: it treats all entries of the kernel matrix as equally important, despite the fact that downstream learning algorithms rarely depend on the kernel in a uniform way.

For kernelized SVMs, the dependence of the learned classifier on the Gram matrix is highly non-uniform \cite{hofmann2008kernel}. Only a subset of training points, the support vectors (the active set), determine the separating hyperplane, and the influence of a kernel entry $K_{ij}$ on the classifier geometry scales proportionally to $\alpha_i \alpha_j y_i y_j$. This suggests that measurement resources should not be distributed uniformly across the kernel matrix, but instead concentrated on entries that are most relevant to the decision boundary. Such a task-aware allocation strategy is challenging because the support-vector structure is initially unknown, the kernel estimates themselves are noisy, and the importance of each entry depends on the evolving classifier learned from those estimates.

In this work, we introduce an adaptive measurement allocation strategy for learning kernelized SVMs from noisy measurements, grounded in the geometric structure of the SVM solution. The method proceeds in two stages. First, a pilot round collects a modest number of measurements for all kernel entries, providing an initial estimate of the kernel matrix and the corresponding dual coefficients. Subsequently, the algorithm performs a sequence of adaptive rounds, where the remaining measurement budget is reallocated according to a score that combines: (i) the sensitivity of the SVM margin to perturbations of individual kernel entries, and (ii) an explicit estimate of active set (the support vectors) instability, capturing the likelihood that points may enter or leave the support-vector set. Together, these signals define a task-aware allocation strategy that focuses resources on entries most relevant to the learned classifier.

The outcome is a measurement strategy that preferentially refines the kernel entries where accuracy matters most, improving support‑vector recovery, margin estimation, and decision‑function stability under fixed measurement budgets. Empirically, we observe consistent improvements over uniform allocation in both kernel reconstruction error and classification performance across a range of datasets and noise regimes.

An open-source research software framework for adaptive measurement allocation is provided through the \texttt{ShotWise} software package to simplify reproducible research and practical deployment.

In the quantum kernel literature, individual measurements are commonly referred to as \textit{shots}. Throughout this paper, we use the more general term, \textit{measurement}, unless discussing quantum implementations specifically. 

\subsection{Contributions}
Our contributions are as follows:
\begin{itemize}
    \item We formulate measurement allocation for noisy kernel estimation as a task-aware problem tailored to kernelized SVMs, highlighting the non-uniform dependence of the classifier on kernel entries.
    \item We establish consistency results showing that the adaptive allocation converges to the corresponding oracle Neyman allocation under mild regularity conditions, yielding asymptotic oracle efficiency.
    \item We introduce a novel allocation criterion combining margin sensitivity and active set instability, providing an interpretable alternative to gradient-based allocation strategies.
    \item We develop a multi-round adaptive algorithm that progressively refines kernel estimates in the most influential regions of the Gram matrix.
    \item We introduce a dual coefficient stability criterion for early stopping, enabling substantial measurement savings while maintaining near-optimal classifier fidelity.
    \item We empirically demonstrate that the proposed approach improves support-vector recovery, margin and decision function estimation of the classifier under fixed measurement budgets.
    \item We release \texttt{ShotWise}, an open-source Python package implementing the proposed adaptive measurement-allocation framework, including support for simulation, quantum hardware execution through \texttt{Qiskit} \cite{javadi2024quantum}, and reproducible benchmarking workflows.
\end{itemize}
\subsection{Related Work}
A related line of work studies scalable kernel approximation through randomized feature maps. Random-feature methods replace implicit kernel evaluations by finite-dimensional randomized embeddings whose inner products approximate the target kernel \cite{rahimi2007random}. In particular, Random Fourier Features approximate shift-invariant kernels by Monte Carlo sampling from the kernel's spectral representation, and subsequent work has analyzed approximation error, variance reduction, and the number of random features needed to preserve downstream learning performance \cite{liu2021random}. More recent work has also considered quasi-Monte Carlo feature constructions to improve the convergence rate of kernel approximation \cite{huang2024quasi}. These approaches primarily focus on global kernel approximation or scalable learning through a shared random feature map. By contrast, the present work studies task-aware allocation of finite measurement resources across individual kernel entries, guided by the geometry and stability of the downstream SVM solution.


Low-rank approximation techniques, such as Nystr\"om-based methods, have been adapted to quantum kernels to reduce the number of required circuit evaluations while preserving downstream performance \cite{coelho2025quantum}.
Similarly, kernel matrix completion approaches exploit structural assumptions, such as approximate low-rankness, to infer missing entries and reduce the need for direct quantum measurements \cite{naveh2021kernel}. These methods focus primarily on reducing the number of kernel evaluations by exploiting global structure in the Gram matrix.

Shot-efficient quantum kernel learning has also been studied from the perspective of reliability of the downstream classifier. In particular, recent work analyzes how finite-shot noise in kernel entries propagates to the SVM solution and derives bounds on the number of measurements required to preserve the margin and classification performance \cite{shastry2022shot}. These approaches focus on establishing global shot-complexity guarantees under uniform estimation strategies, rather than optimizing the allocation of measurements across individual kernel entries.

Adaptive measurement-allocation strategies have also been studied extensively in the context of variational quantum algorithms (VQAs), where finite measurement resources must be distributed across estimates of gradient components during training. Notable examples include the iCANS \cite{Kubler2020adaptiveoptimizer} and gCANS \cite{gu2021adaptive} frameworks, as well as the Rosalin optimizer \cite{arrasmith2020operator}, which allocate shots dynamically according to estimated gradient magnitudes and variances in order to improve convergence while reducing overall measurement cost. Although these methods address a different learning setting than kernel methods, they share the common principle of task-aware measurement allocation, namely that measurement resources should be concentrated where they produce the greatest improvement in downstream learning performance. Our work extends this philosophy to kernelized SVMs, where the fundamental objects are kernel entries rather than optimization gradients, and where allocation decisions are guided by classifier geometry and support-vector stability instead of gradient-estimation uncertainty.

More recently, these ideas have been extended to quantum kernel methods through adaptive allocation schemes operating directly at the level of kernel entries. In particular, a very recent work introduces Active Quantum Kernel Acquisition (AQKA) \cite{xu2026aqka}, which derives allocation rules from the sensitivity of a global training objective with respect to individual kernel entries. The method utilizes gradient-based and KKT-inspired conditions to compute closed-form allocations under a fixed measurement budget, leading to a framework grounded in optimal experimental design.

While both AQKA and our approach depart from uniform sampling and exploit the non-uniform relevance of kernel entries, the underlying perspectives differ. AQKA formulates measurement allocation as a global optimization problem driven by gradients of a surrogate loss function, whereas our approach focuses specifically on the geometric structure of kernelized SVMs. In particular, we derive allocation signals directly from the dual coefficients and the stability of the support-vector set, explicitly targeting margin sensitivity and active set uncertainty. This leads to a simpler and more interpretable allocation mechanism that is tightly coupled to the SVM decision boundary, rather than to a generic objective function.

Orthogonally to kernel estimation, recent work has investigated the computational complexity of inference in quantum kernel methods. In particular, \cite{gil2026optimal} shows that by encoding the full decision function as a single observable and employing quantum amplitude estimation, the query complexity of inference can be reduced from $\mathcal{O}(N)$ to $\mathcal{O}(|\alpha|_1/\epsilon)$, which is provably optimal. These results address the cost of evaluating trained models, and are complementary to our focus on shot-efficient estimation of the kernel matrix itself.

\subsection{Manuscript structure}
The remainder of the paper is organized as follows. Section~\ref{sec:background} introduces the measurement-based kernel framework and reviews the SVM sensitivity and active set analyses that support our approach. Section~\ref{sec:problem_formulation} develops the theoretical measurement-allocation framework, including a general variance-aware allocation rule, a Bernoulli specialization, and a comparison with uniform allocation. Section~\ref{sec:adaptive} presents the adaptive allocation algorithm and early-stopping criterion. Section~\ref{sec:results} reports empirical results on synthetic and quantum-kernel datasets. Section~\ref{sec:shotwise} describes the \texttt{ShotWise} software package and reproducibility resources. Finally, Section~\ref{sec:conclusion} concludes the paper.

\section{Background}\label{sec:background}
In this section, we introduce the theoretical ingredients underlying the proposed measurement-allocation framework. 
We first review Support Vector Machines (SVMs), with particular emphasis on the geometric quantities that determine the influence of kernel entries on the learned classifier. 
We then introduce the notion of measurement-based kernels, in which kernel entries are observed through noisy estimation procedures, and conclude with a Bernoulli specialization that captures the finite-measurement kernel estimation setting encountered in many applications.
\subsection{Support Vector Machines}

\noindent Support Vector Machines (SVMs) \cite{cortes1995support} are large‑margin classifiers that construct a decision function by focusing on the most informative training points, known as \emph{support vectors}. These support vectors lie closest to the decision boundary and fully determine both the geometry of the classifier and its predictive behavior. One of the central strengths of SVMs is that their optimization problem can be expressed entirely in terms of pairwise inner products between data points. This enables the use of \emph{kernels} to implicitly map the data into a high-dimensional feature space without explicitly computing feature embeddings, a mechanism widely known as the \emph{kernel trick}.

Given training samples $\{(x_i, y_i)\}_{i=1}^n$ with labels $y_i \in \{\pm 1\}$ and a positive semidefinite kernel function $K(\cdot,\cdot)$, the SVM is trained by solving the dual optimization problem
\begin{equation}
\label{eq:svm_dual}
\begin{aligned}
\max_{\alpha}\quad 
    & \sum_{i=1}^n \alpha_i 
    \;-\;
    \frac{1}{2}\sum_{i=1}^n\sum_{j=1}^n 
        \alpha_i \alpha_j y_i y_j K(x_i, x_j),
\\[1mm]
\text{s.t.}\quad
    & 0 \le \alpha_i \le C,\qquad
      \sum_{i=1}^n \alpha_i y_i = 0,
\end{aligned}
\end{equation}
where the variables $\alpha_i$ are the \emph{dual coefficients}. These coefficients measure the contribution of each training sample to the decision boundary. At optimality, only a subset of coefficients satisfies $\alpha_i > 0$; the associated samples constitute the support vectors. Because the dual formulation depends exclusively on $K(x_i, x_j)$, the SVM directly incorporates similarities between samples without requiring explicit feature representations.

Once the optimal coefficients $\alpha$ are obtained, the decision function takes the form
\begin{equation}\label{eq:decision_function_optimal}
f(x) 
    = \sum_{i=1}^n \alpha_i y_i K(x_i, x) + b,
\end{equation}
where $b$ is the bias recovered from the Karush--Kuhn--Tucker conditions. The signed distance to the decision boundary is given by $f(x)$, and classification is determined by $\operatorname{sign}(f(x))$.

The margin of the classifier $1/\|w\|$ quantifies the separation between the two classes. The norm can be expressed in terms of the dual variables as
\begin{equation}\label{eq:margin_optimal}
\|w\|
    = \sqrt{
     \sum_{i,j}
        (\alpha_i y_i )
        K_{ij}
        (\alpha_j y_j)
      }.
\end{equation}



\subsection{Measurement‐Based Kernels}

Let $K \in [0,1]^{n \times n}$ denote the true, symmetric, positive semi-definite kernel matrix. In practice, the learner does not have direct access to the entries $K_{ij}$, but instead accesses kernel entries through a stochastic measurement procedure, in which noisy realizations of the kernel are prepared and sampled.

The effective kernel values are intended to approximate the true kernel, but may be impacted by hardware noise, as is typical in near-term quantum computing devices \cite{preskillQuantumComputingNISQ2018, vukvsic2026comparative}. We model this by introducing a stochastic effective kernel process $K^{(k)}$, where each realization corresponds to one execution of the underlying kernel-entry evaluation procedure:
\begin{equation}\label{eq:hardware_noisy_k}
    K^{(k)}_{ij} = K_{ij} + \varepsilon^{(k)}_{ij},
\end{equation}
where the noise term $\varepsilon^{(k)}_{ij}$ represents hardware-induced fluctuations satisfying
\begin{equation}
    \mathbb{E}[\varepsilon^{(k)}_{ij}] = 0.
\end{equation}
We do not assume that the noise realizations $\varepsilon^{(k)}_{ij}$ are independent across measurements. In particular, correlations between $\varepsilon^{(k)}_{ij}$ and $\varepsilon^{(s)}_{ij}$ for $k \neq s$ are allowed, capturing realistic effects such as calibration drift, coherent errors, or slowly varying hardware fluctuations. 
Define
\begin{equation}
    \sigma^2_{\mathrm{phys},ij} := \mathrm{Cov}( K^{(k)}_{ij}, K^{(s)}_{ij}), \qquad k \neq s,
\end{equation}
where we assume that the kernel noise process admits finite second moments and stationary second-order statistics, i.e. $\mathrm{Cov}( K^{(k)}_{ij}, K^{(s)}_{ij})$ depends only on whether $k=s$ or $k \neq s$.
As we will see, these correlations induce variability in the effective kernel across repeated evaluations that cannot be reduced by averaging alone.

The learner does not observe the effective kernel directly. Instead, each kernel entry is estimated from a finite set of observations, producing an estimator $\widehat K_{ij}$. The resulting estimator variance reflects both the variability of the effective kernel process and the finite precision of the observation procedure. We denote this variance by $\operatorname{Var}(\widehat K_{ij})$. The specific form of the estimator and its variance depend on the underlying measurement model and will be introduced later in Section~\ref{sec:bernoulli}.

\subsection{Sensitivity of the SVM to Kernel Perturbations}\label{sec:theory_sensitivity}

In the measurement-based setting, kernel entries are observed only through noisy estimates $\widehat{K}_{ij}$. Because each kernel entry enters the dual objective explicitly, even small perturbations of particular kernel values can alter the learned classifier.

We assume here that a small perturbation of the kernel matrix, $\delta K$, does not change the identity of the active set. More formally,
\begin{assumption}[Local Active Set Stability] \label{ass:active} 
Let \[ \mathcal A(K) = \Big( \{i:\alpha_i=0\}, \{i:0<\alpha_i<C\}, \{i:\alpha_i=C\} \Big) \] denote the partition of training samples induced by the optimal dual solution of the SVM trained with kernel matrix $K$. There exists a neighborhood $\mathcal N(K)$ of the true kernel matrix such that \[ \mathcal A(K+\delta K) = \mathcal A(K) \] for every $K+\delta K\in\mathcal N(K)$.
\end{assumption}  
Under this condition, the envelope theorem (See App.~\ref{app:envelope}) implies that the derivative of the margin with respect to a kernel entry is the partial derivative of the dual objective evaluated at the optimum, without differentiating through $\alpha$,
\begin{equation}\label{eq:margin_derivative}
    \frac{\partial \|w\|^2}{\partial K_{ij}}
    = \alpha_i \alpha_j y_i y_j.
\end{equation}
Thus the influence of $K_{ij}$ on the classifier geometry is determined by:
\begin{itemize}
    \item whether both $i$ and $j$ are support vectors,
    \item the magnitude of their dual coefficients,
    \item and the label interaction term $y_i y_j$.
\end{itemize}
Kernel entries involving non-support vectors ($\alpha_i = 0$ or $\alpha_j = 0$) have zero first-order influence on the margin. This reveals a strong sparsity structure: only a small subset of kernel entries, those associated with active constraints, contribute meaningfully to the geometry of the classifier. 

\subsection{Active‑Set Instability and Near‑Margin Uncertainty}\label{sec:instability_measure}
Support Vector Machines depend not only on the numerical values of the dual coefficients, but also on the identity of the indices that satisfy the optimality conditions. This set of indices constitutes the active set. In the process of kernel estimation, changes in the kernel matrix can alter the margin geometry and the decision values, potentially causing a point to enter or leave the support‑vector set. Because measurement noise affects $\widehat{K}$ unevenly across entries, the risk of these active set transitions is non‑uniform across training points.
Additionally, unlike smooth variations in the dual variables, changes in this active set constitute discrete transitions that can significantly alter the classifier.

In the context of kernel estimation, perturbations in the kernel matrix modify both the margin geometry and the decision values, potentially causing a point to enter or leave the support-vector set.
At the optimum, each point satisfies one of the complementary slackness conditions:
\begin{equation}
\begin{cases}
\alpha_i = 0, & y_i f(x_i) > 1, \\
0 < \alpha_i < C, & y_i f(x_i) = 1, \\
\alpha_i = C, & y_i f(x_i) < 1.
\end{cases}
\end{equation}
Thus, changes in support-vector membership correspond to threshold-crossing events in the margin residual
\begin{equation}
    \Delta_i = y_i f(x_i) - 1.
\end{equation}
If
\begin{itemize}
    \item $\Delta_i > 0$, the point lies outside the margin and has $\alpha_i = 0$,
    \item $\Delta_i = 0$, the point lies exactly on the margin, with $0 < \alpha_i < C$,
    \item $\Delta_i < 0$, the soft‑margin constraint is active, yielding $\alpha_i = C.$
\end{itemize}

Noise (hardware or measurement) perturbs the decision values $f(x_i)$, so the sign of $\Delta_i$ becomes uncertain whenever the decision function is uncertain. 
Points with small $|\Delta_i|$ lie close to the decision boundary and are therefore most susceptible to changes in support-vector status under perturbations.

Since the decision function, Eq.~\eqref{eq:decision_function_optimal},
is linear in the kernel entries, its variance under measurement noise is
\begin{equation}\label{eq:decision_function_variance}
\mathrm{Var}(\widehat{f}(x_i))
= \sum_{j=1}^n (\hat{\alpha}_j y_j)^2 \,\mathrm{Var}(\widehat K_{ij})    
\end{equation}
This expression is independent of the specific measurement model and depends only on the variance structure of the kernel estimator.

The probability that $x_i$ lies on the margin (or inside it) is
\begin{equation}\label{eq:prob_sv}
    P_i
    := \mathbb{P}(y_i \widehat{f}(x_i) \le 1).
\end{equation}
The quantity $P_i$ can be interpreted as the probability that the point $x_i$ lies on or within the margin under measurement noise, and therefore as a proxy for the likelihood that $x_i$ belongs to, or may transition into, the support-vector set.

This perspective reveals a second, fundamentally different source of sensitivity beyond the local derivatives discussed earlier. While gradient-based sensitivity captures how the objective varies under infinitesimal perturbations, active set instability captures discrete structural changes in the classifier induced by finite perturbations. As a result, points with high transition probability $P_i$ represent regions where additional measurement precision is most valuable, as errors in these regions can alter both the composition of the support-vector set and the resulting decision boundary.

\subsection{Bernoulli Measurement Model}\label{sec:bernoulli}
The preceding developments require only access to the variance of the kernel estimator and are therefore agnostic to the specific observation model. To obtain explicit expressions that can be used for allocation design and empirical evaluation, we now specialize the framework to Bernoulli measurements. Besides being analytically convenient, this model captures the finite-shot estimation process underlying quantum kernel methods, which constitute the primary experimental focus of this paper.

Given a realization of the effective kernel, individual measurements are obtained through Bernoulli sampling:
\begin{equation}
    X^{(k)}_{ij} \sim \mathrm{Bernoulli}(K^{(k)}_{ij}), \qquad k = 1, \dots, N_{ij},
\end{equation}
where $N_{ij}$ is the number of measurements allocated to the pair $(i,j)$. 
The empirical estimator
\begin{equation}\label{eq:K_hat}
    \widehat{K}_{ij} = \frac{1}{N_{ij}} \sum_{k=1}^{N_{ij}} X^{(k)}_{ij}
\end{equation}
is an unbiased estimator of the mean effective kernel,
\begin{equation}
    \mathbb{E}[\widehat{K}_{ij}] = K_{ij}
\end{equation}
which follows from the zero-mean assumption on the noise.

The variance of the estimator reflects two distinct sources of uncertainty: finite-measurement sampling noise and fluctuations of the underlying kernel across repeated evaluations (See App.~\ref{App:variance}),
\begin{equation}\label{eq:K_var}
    \mathrm{Var}(\widehat{K}_{ij}) =
    \frac{K_{ij}(1 - K_{ij})}{N_{ij}}
    +
    \left(1 - \frac{1}{N_{ij}}\right)\sigma^{2}_{\mathrm{phys},ij}.
\end{equation}
The first term corresponds to standard binomial sampling noise and decreases with $N_{ij}$, while the second term represents an irreducible variance component induced by correlations in the underlying kernel process.

The diagonal entries $K_{ii}=1$ are assumed to be known exactly and require no measurement. Moreover, by symmetry $K_{ij}=K_{ji}$, only the independent kernel entries in the upper triangle (excluding the diagonal) must be estimated. We refer to this set as the \emph{independent kernel entries}.

\section{Problem Formulation and Measurement Strategies}\label{sec:problem_formulation}
This section develops the measurement-allocation framework from the perspective of SVM geometry. We derive oracle allocation rules that characterize the optimal distribution of measurements under idealized assumptions and analyze their relationship to the standard uniform allocation. We then bridge these idealized results to practical settings, where the true model is unknown and allocation decisions must be made using noisy estimates, motivating both the adaptive allocation procedure and the instability correction introduced later in the manuscript.

\begin{figure}[t!]
    \centering
    \includegraphics[width=\linewidth]{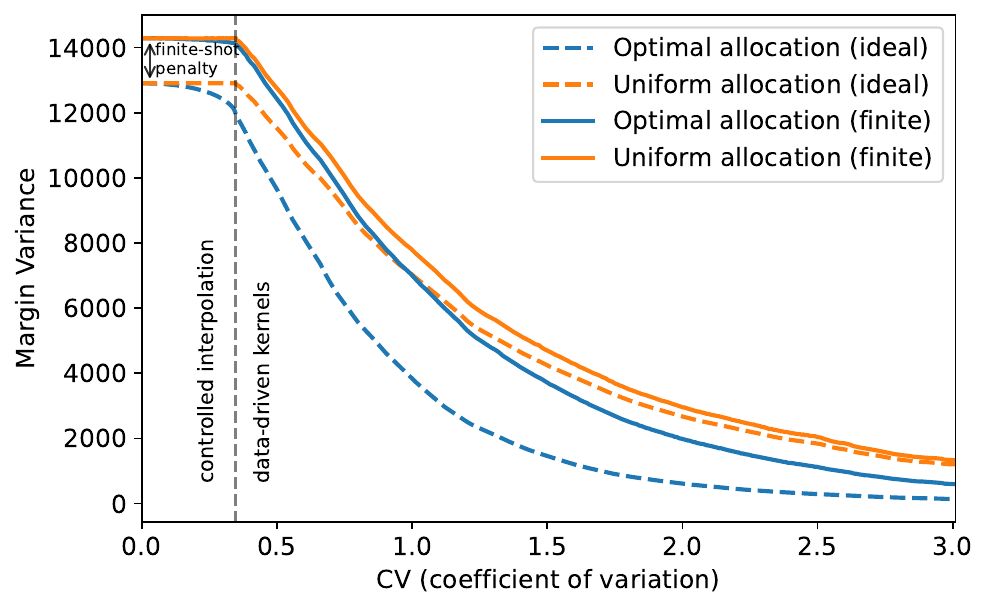}
    \caption{Variance of uniform and optimal measurement allocation schemes as a function of weight heterogeneity.
The top panel shows the margin variance for ideal (dashed) and finite-measurement (solid) implementations of optimal and uniform allocation strategies, plotted against the coefficient of variation $\mathrm{std}(w^B_{ij})/\mathrm{mean}(w^B_{ij})$. The left region (“controlled interpolation”) is constructed to enforce the homogeneous limit $\mathrm{CV} \to 0$, where all weights are equal and both allocation schemes coincide. The right region (“data-driven kernels”) corresponds to weights derived from actual kernel instances.
As heterogeneity increases, the optimal allocation significantly reduces the variance relative to uniform sampling. Finite-measurement effects introduce a systematic upward shift of the optimal curve, most pronounced in the low-CV regime and indicated by the vertical gap between oracle and finite curves (finite-shot penalty), consistent with Eq.~\eqref{eq:finite_penalty}.}
    \label{fig:theory_variances}
\end{figure}
\subsection{Problem Formulation}

\noindent Let $K \in [0,1]^{n\times n}$ denote the true positive‑semidefinite kernel matrix and let
$\alpha,\, f(\cdot),\, \|w\|$ be the corresponding SVM solution obtained from Eq.~\eqref{eq:svm_dual}. This “true‑kernel SVM” serves as the reference model.

In the measurement-based setting, the learner does not have direct access to the exact entries of $K$. Instead, the off-diagonal kernel entries are estimated through a stochastic measurement process, producing estimates $\widehat{K}_{ij}$ whose accuracy depends on the allocated measurement resources.

In practice, measurement resources are limited, and we consider a global constraint on the total number of measurements,
\begin{equation}\label{eq:N_tot}
    \sum_{1 \le i < j \le n} N_{ij} = N_{\mathrm{tot}}.
\end{equation}
Training the SVM on the estimated kernel $\widehat{K}$ leads to a model $\widehat{\alpha},\; \widehat f(\cdot),\; \|\widehat w\|$ which generally deviates from the ideal solution. This discrepancy manifests through multiple mechanisms, including continuous perturbations of the dual coefficients, variations in the classifier margin, fluctuations of the decision function, and discrete transitions in the active set.

These observations motivate a task-aware formulation of the measurement allocation problem.

\begin{mdframed}[linewidth=1pt]
\textbf{Objective.}
Given a fixed measurement budget $N_{\mathrm{tot}}$, choose measurement counts $\{N_{ij}\}$ to maximize the fidelity of the learned SVM classifier relative to the true-kernel SVM, subject to $\sum_{1 \le i < j \le n} N_{ij} = N_{\mathrm{tot}}$.
\end{mdframed}

\subsection{Margin Sensitivity and Optimal Allocation}

The objective stated above is conceptually clear but difficult to handle analytically without specifying a concrete measure of degradation of the classifier. Several candidates can be considered, including perturbations of the dual variables, deviations of the decision function, or changes in the geometric margin. 

In this work, we focus on minimizing the variance of the squared norm of the separating hyperplane, $\|w\|^2$, which is inversely related to the squared margin. It provides a global measure of the classifier’s stability and can be directly related to the kernel matrix through Eq.~\eqref{eq:margin_optimal}. This choice leads to a tractable approximation in which the effect of measurement noise can be explicitly quantified and optimized. 

Importantly, this choice is not unique. Alternative formulations, such as minimizing the variance of the decision function, can be analysed within the same framework. As shown in the following subsections, these different objectives lead to allocation strategies of similar structure, concentrating measurements on pairs of points associated with large dual coefficients. This consistency justifies the use of margin variance as a representative and analytically convenient surrogate.

To make the objective concrete, we aim to choose measurement counts to solve the constrained problem
\begin{align}\label{eq:margin_min}
    \min_{\{N_{ij}\}} \ \mathrm{Var}(\|w\|^2)
    \quad \text{s.t.} \quad \sum_{i<j} N_{ij} = N_{\mathrm{tot}}.
\end{align}

Using the sensitivity result in Eq.~\eqref{eq:margin_derivative}, we approximate the variance of the squared margin via first-order propagation of kernel estimation errors:
\begin{equation}
    \mathrm{Var}(\|\widehat{w}\|^2)
    \approx
    \sum_{i<j} (\alpha_i \alpha_j y_i y_j)^2 \, \mathrm{Var}(\widehat{K}_{ij}).
\end{equation}
Since $(y_i y_j)^2 = 1$, the label dependence disappears in the squared sensitivity term.

The preceding sensitivity analysis identifies the local influence of each kernel entry on the squared norm of the classifier. 
To connect this sensitivity to measurement allocation, we first consider the setting in which the variance of the kernel estimator admits the inverse-budget form 
\begin{equation} \label{eq:noise_model_general}
\operatorname{Var}(\widehat K_{ij}) = \frac{\sigma_{ij}^2}{N_{ij}}, 
\end{equation} 
where $\sigma_{ij}^2$ denotes the intrinsic per-measurement variance associated with entry $(i,j)$. 
This corresponds to the regime in which estimator uncertainty is entirely attributable to finite sampling and therefore decreases proportionally to the number of measurements. 
While this assumption excludes the irreducible variance component appearing in Eq.~\eqref{eq:K_var}, it encompasses a broad class of measurement-based estimators and leads to a transparent Neyman-type allocation rule. 
We therefore adopt it temporarily to isolate the role of measurement allocation. The effects of irreducible fluctuations in the underlying kernel process are revisited in the Bernoulli specialization of Sec.~\ref{sec:Bernoulli_specialization}.

Using the first-order approximation of the squared margin, we obtain 
\begin{equation} 
\operatorname{Var}(\|\widehat w\|^2) \approx \sum_{i<j} (\alpha_i\alpha_j)^2 \operatorname{Var}(\widehat K_{ij}) = 
\sum_{i<j} \frac{(\alpha_i\alpha_j)^2\sigma_{ij}^2}{N_{ij}} . 
\end{equation}

We therefore define the sensitivity allocation weight 
\begin{equation} 
w_{ij} := |\alpha_i\alpha_j|\,\sigma_{ij}. 
\end{equation} 
This quantity combines two factors: the geometric sensitivity of the SVM solution to the kernel entry \(K_{ij}\), and the intrinsic uncertainty with which that entry can be estimated.
The corresponding variance-minimization problem is 
\begin{equation} 
\min_{\{N_{ij}\}} \sum_{i<j} \frac{w_{ij}^2}{N_{ij}} \quad \mathrm{s.t.} \quad \sum_{i<j}N_{ij}=N_{\mathrm{tot}}. 
\end{equation}
Solving this constrained optimization problem yields the 
\begin{equation} \label{eq:optimal_allocation}
N_{ij}^{\star} = N_{\mathrm{tot}} \frac{w_{ij}} {\sum_{p<q}w_{pq}}. 
\end{equation} 
The resulting allocation has the same mathematical form as the classic Neyman allocation from stratified sampling \cite{neyman1992two}. 

We take note, that a natural baseline for estimating the kernel matrix under a finite measurement budget is the \emph{uniform measurement scheme}. In it, the total budget $N_{\mathrm{tot}}$ is distributed evenly across all independent off‑diagonal kernel entries. Each pair $(i,j)$ with $i<j$ receives the same number of measurements,
\begin{equation}
    N_{ij}^{\mathrm{unif}} = \frac{2 N_{\mathrm{tot}}}{n(n-1)}.
\end{equation}
Uniform allocation is the de facto standard in much of the quantum-kernel literature. 
In simulation-based studies, measurement costs are often abstracted away because kernel evaluations can be computed deterministically and at comparatively low cost.
As quantum hardware continues to mature and measurement resources become increasingly important in practical deployments, questions of how to allocate finite budgets efficiently are expected to play a more central role.

For the variance proxy introduced above, Eq.~\eqref{eq:margin_min}, the corresponding variance under uniform allocation is
\begin{equation}\label{eq:V_proxy_uniform}
V_{\mathrm{unif}} = \frac{n(n-1)}{2N_{\mathrm{tot}}} \sum_{i<j} w_{ij}^2,    
\end{equation}
while the optimal allocation yields
\begin{equation}\label{eq:V_proxy_optimal}
V^\star = \frac{\left(\sum_{i<j} w_{ij}\right)^2}{N_{\mathrm{tot}}}.    
\end{equation}

A simple argument based on Cauchy-Schwarz inequality (See App.\ref{app:propositions}) leads to the following proposition.
\begin{proposition}\label{prop:optimal_vs_uniform}
\[
V^\star \le V_{\mathrm{unif}},
\]
with equality if and only if $w_{ij}$ are constant.
\end{proposition}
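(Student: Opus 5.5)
Let $M = n(n-1)/2$ denote the number of independent off-diagonal entries. Using the closed forms in Eq.~\eqref{eq:V_proxy_uniform} and Eq.~\eqref{eq:V_proxy_optimal}, the common factor $1/N_{\mathrm{tot}}>0$ cancels. The claim is then equivalent to
\[
\Bigl(\sum_{i<j} w_{ij}\Bigr)^2 \le M \sum_{i<j} w_{ij}^2 .
\]
This is exactly the Cauchy--Schwarz inequality in $\mathbb{R}^M$ applied to the vector $(w_{ij})_{i<j}$ and the all-ones vector $\mathbf{1}$:
\[
\langle w,\mathbf{1}\rangle^2 \le \|w\|_2^2\,\|\mathbf{1}\|_2^2 ,
\qquad \|\mathbf{1}\|_2^2 = M .
\]

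An alternative route, which also makes the gap explicit, is the variance identity
\[
M\sum_{i<j} w_{ij}^2 - \Bigl(\sum_{i<j} w_{ij}\Bigr)^2 = M^2\,\mathrm{Var}_{\mathrm{emp}}(w) \ge 0 .
\]
Dividing by $N_{\mathrm{tot}}$ gives
\[
V_{\mathrm{unif}} - V^\star = \frac{M^2}{N_{\mathrm{tot}}}\,\mathrm{Var}_{\mathrm{emp}}(w),
\]
so the improvement of the optimal allocation is governed by the heterogeneity of the weights. I would present this form because it matches the coefficient-of-variation discussion around Fig.~\ref{fig:theory_variances}.

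For the equality case, recall that equality in Cauchy--Schwarz holds if and only if $w$ and $\mathbf{1}$ are linearly dependent, that is, $w_{ij}\equiv c$ for some constant $c$. Equivalently, in the variance form, the empirical variance vanishes exactly when all entries are equal. The converse direction is immediate: if $w_{ij}\equiv c$, then $V^\star = M^2c^2/N_{\mathrm{tot}} = V_{\mathrm{unif}}$, and indeed $N^\star_{ij}=N_{\mathrm{tot}}/M = N^{\mathrm{unif}}_{ij}$.

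There is no substantial obstacle. The only points to state carefully are the following.
\begin{itemize}
\item The weights satisfy $w_{ij}\ge 0$. This is not actually needed for the inequality, but it ensures that Eq.~\eqref{eq:optimal_allocation} is a valid allocation.
\item Both expressions treat the counts $N_{ij}$ as continuous, since they are the idealized closed forms.
\item The degenerate case where all $w_{ij}=0$ (no support vectors) falls under the constant case, with both sides equal to zero.
\end{itemize}
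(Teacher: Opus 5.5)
Your proof is correct and follows the paper's argument: cancel the common factor $1/N_{\mathrm{tot}}$ and apply Cauchy--Schwarz to the weight vector $(w_{ij})_{i<j}$ and the all-ones vector in $\mathbb{R}^{n(n-1)/2}$. You also give an explicit equality case, which the paper's appendix leaves implicit, and the gap identity $V_{\mathrm{unif}}-V^\star = M^2\,\mathrm{Var}_{\mathrm{emp}}(w)/N_{\mathrm{tot}}$; both are correct additions.
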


The gap between optimal and uniform allocation is governed by the dispersion of the weights $w_{ij}$. 
When the weights are heterogeneous, the inequality is strict and the optimal allocation, Eq.~\eqref{eq:adaptive_allocation}, provides a significant reduction in variance. 
Since $w_{ij}$ depends on the dual variables, this regime corresponds to structured SVM solutions in which only a subset of training points contributes strongly to the decision boundary.
To illustrate Proposition~\ref{prop:optimal_vs_uniform}, Fig.~\ref{fig:theory_variances} shows the variance proxies associated with the optimal and uniform allocations for synthetic weight distributions with varying levels of heterogeneity.
Heterogeneity is quantified through the coefficient of variation \( \mathrm{CV} = \frac{\mathrm{std}(w_{ij})} {\mathrm{mean}(w_{ij})}\).
The dashed curves correspond to the theoretical variance proxies given in Eq.~\eqref{eq:V_proxy_uniform} and Eq.~\eqref{eq:V_proxy_optimal}.
As predicted by Proposition~\ref{prop:optimal_vs_uniform}, the two variances coincide in the homogeneous regime (\(\mathrm{CV}\rightarrow 0\)), where all weights are equal. 
As the heterogeneity of the weights increases, the gap between the two curves widens, indicating an increasing advantage of task-aware allocation over uniform sampling. 
This behavior reflects the growing concentration of importance within a subset of kernel entries, making non-uniform measurement allocation progressively more beneficial.

\subsection{Bernoulli Specialization} \label{sec:Bernoulli_specialization}
We now specialize the general allocation framework developed above to the Bernoulli measurement model introduced in Section~\ref{sec:bernoulli}. 
In this setting, the reducible sampling variance associated with kernel entry $(i,j)$ is 
\[ \sigma_{ij}^2 = K_{ij}(1-K_{ij}). \] 
Substituting this expression into the general sensitivity allocation weight yields 
\begin{equation}\label{eq;w_B}
w_{ij}^{\mathrm B} = |\alpha_i\alpha_j| \sqrt{K_{ij}(1-K_{ij})}, 
\end{equation} 
and the corresponding oracle allocation becomes 
\begin{equation} \label{eq:adaptive_allocation}
N_{ij}^{\star} = N_{\mathrm{tot}} \frac{w_{ij}^{\mathrm B}} {\sum_{p<q} w_{pq}^{\mathrm B}}. 
\end{equation}
In the presence of both sampling noise and stochastic hardware-induced fluctuations, the variance of the kernel estimator is given by Eq.~\eqref{eq:K_var}. Substituting this expression yields
\begin{equation} 
\begin{aligned} 
&\mathrm{Var}(\|w\|^2) \approx \\
&\sum_{i<j} (\alpha_i\alpha_j)^2 \left( \frac{K_{ij}(1-K_{ij})}{N_{ij}} + \left( 1-\frac1{N_{ij}} \right) \sigma_{\mathrm{phys},ij}^2 \right). 
\end{aligned} 
\end{equation}

This expression consists of a reducible sampling component and an irreducible hardware component,
\begin{align}
    V_{\mathrm{sampling}} 
    &= \sum_{i<j} (\alpha_i \alpha_j)^2 \frac{K_{ij}(1-K_{ij})}{N_{ij}}, \\
    V_{\mathrm{phys}}
    &= \sum_{i<j} (\alpha_i \alpha_j)^2 \sigma^2_{\mathrm{phys},ij}.
\end{align}

The oracle allocation continues to be determined solely by the reducible sampling component, while the hardware contribution introduces an irreducible variance floor that cannot be eliminated through additional measurements.


Thus, the presence of hardware-induced kernel fluctuations does not alter the structure of the optimal allocation, but introduces a fundamental limit on the achievable accuracy of the estimator. In particular,
\begin{equation}
    \lim_{N_{\mathrm{tot}} \to \infty}
    \mathrm{Var}(\|w\|^2)
    =
    \sum_{i<j} (\alpha_i \alpha_j)^2 \sigma^2_{\mathrm{phys},ij},
\end{equation}
showing that the variance cannot be reduced below a hardware-imposed floor.


\subsection{Decision Function Variance Minimization}

While the previous analysis focused on the variance of the margin, one may alternatively consider minimizing the variance of the decision function. Recall that the decision value at a training point $x_i$ is given by Eq.~\eqref{eq:decision_function_optimal}.
Under the same noise model, the variance of $f(x_i)$ can be approximated as in Eq.~\eqref{eq:decision_function_variance}.
Aggregating the variance over all training points provides a global objective measuring decision-function stability
\begin{equation}
V_{\mathrm{dec}}
=
\sum_{i<j}
(\alpha_i^2 + \alpha_j^2)
\frac{K_{ij}(1-K_{ij})}{N_{ij}},    
\end{equation}
which corresponds to weights
\begin{equation}
\left(w_{ij}^{\mathrm{dec}}\right)^2
\coloneqq
(\alpha_i^2 + \alpha_j^2)\,K_{ij}(1-K_{ij}).    
\end{equation}

The optimal allocation when minimizing the decision function follows from Eq.~\eqref{eq:adaptive_allocation},
\begin{equation}
N_{ij}^{\mathrm{dec}} \propto \sqrt{\alpha_i^2 + \alpha_j^2}\,\sqrt{K_{ij}(1-K_{ij})}.    
\end{equation}

Compared with the sensitivity allocation weight, Eq.~\eqref{eq;w_B}, the above expressions differ quantitatively in its exact dependence on the dual variables. 
However, both formulas exhibit the same qualitative behavior. 
In particular, both allocations assign larger budgets to entries involving large dual coefficients, and therefore concentrate measurements on support vectors. 
Therefore, both objectives lead to allocation strategies that exploit the same underlying geometric structure of the SVM solution. This demonstrates that the specific choice of margin variance as the optimization target is not restrictive, but rather representative of a broader class of objectives that prioritize support-vector regions and near-margin interactions. Consequently, the heterogeneity effects discussed in the previous subsection carry over directly to alternative formulations based on decision-function stability.

\subsection{From Oracle to Adaptive Allocation} 

So far, we have analyzed measurement allocation in an idealized oracle setting. 
In particular, we have assumed that: (i) fractional measurement allocations are admissible, and (ii) the quantities required to construct the allocation, including the parameters of the optimal classifier, are known exactly.

In practice, neither assumption holds. 
Measurement budgets must ultimately be realized through integer-valued allocations, and the quantities entering the allocation rule must be estimated from noisy observations. 
Consequently, the allocation itself is subject to uncertainty and may deviate from the oracle optimum.

The adaptive procedure introduced in the next section addresses this challenge by sequentially refining both the kernel estimates and the corresponding allocation weights through repeated measurement rounds. 
Its objective is to progressively approach the allocation that would be obtained if the underlying classifier were known exactly.

The goal of the present subsection, together with the next one, is to bridge the gap between the idealized oracle analysis and practical measurement allocation. Unless stated otherwise, the results presented below apply to the general noise model of Eq.~\eqref{eq:noise_model_general} and do not rely on the Bernoulli specialization introduced in Sec.~\ref{sec:Bernoulli_specialization}.

We begin by quantifying the impact of allocation errors on the variance objective.

\begin{proposition}\label{prop:alocation_errors}
Let $N_{ij}^\star$ denote the optimal (oracle) allocation given by Eq.~\eqref{eq:adaptive_allocation}, and consider a perturbed allocation
\[
N_{ij} = N_{ij}^\star + \delta N_{ij},
\]
with the constraint $\sum_{i<j} \delta N_{ij} = 0$. Assume that the perturbations $\delta N_{ij}$ are small.

Then, under a second-order Taylor expansion of the variance objective, the expected increase in the variance satisfies
\begin{equation}\label{eq:finite_penalty}
\mathbb{E}[V]
=
V^\star
+
\sum_{i<j} 
\frac{w_{ij}^2}{(N_{ij}^\star)^3}
\, \mathbb{E}\left[(\delta N_{ij})^2\right]
+
\mathcal{O}(\|\delta N\|^3).
\end{equation}
\end{proposition}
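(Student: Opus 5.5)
The plan is to Taylor-expand the variance objective
\[
V(\{N_{ij}\}) = \sum_{i<j} \frac{w_{ij}^2}{N_{ij}}
\]
around the oracle point $N^\star$, treating the $N_{ij}$ as continuous variables, as in the oracle analysis. Since $V$ is separable, each term $g_{ij}(N)=w_{ij}^2/N$ can be expanded independently:
\[
\frac{w_{ij}^2}{N_{ij}^\star+\delta N_{ij}}
= \frac{w_{ij}^2}{N_{ij}^\star} - \frac{w_{ij}^2}{(N_{ij}^\star)^2}\,\delta N_{ij} + \frac{w_{ij}^2}{(N_{ij}^\star)^3}(\delta N_{ij})^2 + \mathcal{O}(|\delta N_{ij}|^3),
\]
using $g''(N)=2w^2/N^3$, so that $\tfrac12 g''$ gives exactly the coefficient in Eq.~\eqref{eq:finite_penalty}. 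Summing over $i<j$ reproduces $V^\star$ as the zeroth-order term.

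The key step is to show that the first-order term vanishes identically, not merely in expectation. By Eq.~\eqref{eq:optimal_allocation}, $N_{ij}^\star = N_{\mathrm{tot}} w_{ij}/S$ with $S=\sum_{p<q}w_{pq}$. Hence
\[
\frac{w_{ij}^2}{(N_{ij}^\star)^2} = \frac{S^2}{N_{\mathrm{tot}}^2},
\]
which is a constant independent of $(i,j)$. This is precisely the Lagrange/KKT stationarity condition of the Neyman problem. The linear term therefore equals $-(S^2/N_{\mathrm{tot}}^2)\sum_{i<j}\delta N_{ij}=0$ by the budget constraint. This is where the assumption $\sum\delta N_{ij}=0$ is used, and it explains why the penalty is quadratic in the allocation error.

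Finally, take expectations over the randomness in $\delta N$, which arises from estimated weights and integer rounding. The expectation passes through the finite sum, giving the second-order term $\sum_{i<j} w_{ij}^2\,\mathbb{E}[(\delta N_{ij})^2]/(N_{ij}^\star)^3$. The remainders are collected into $\mathcal{O}(\|\delta N\|^3)$.

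The main obstacle is justifying this remainder. The Lagrange form of the third-order term is $-w_{ij}^2\,\delta N_{ij}^3/\xi_{ij}^4$ with $\xi_{ij}$ between $N^\star_{ij}$ and $N_{ij}$. This is uniformly controlled only if $N_{ij}$ stays bounded away from zero, for example if $|\delta N_{ij}|\le N^\star_{ij}/2$. I would make this explicit as the meaning of ``small perturbations'', restrict to entries with $w_{ij}>0$ so that $N^\star_{ij}>0$, and interpret the $\mathcal{O}(\|\delta N\|^3)$ term in expectation, i.e.\ as $\mathcal{O}(\mathbb{E}\|\delta N\|^3)$, under a bounded-support or moment assumption on $\delta N$.
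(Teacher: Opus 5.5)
Your proposal is correct and follows the paper's proof. You expand each $w_{ij}^2/N_{ij}$ to second order about $N^\star_{ij}$, kill the linear term via the budget constraint, and take expectations; your explicit identity $w_{ij}^2/(N^\star_{ij})^2=S^2/N_{\mathrm{tot}}^2$ is the KKT condition the paper invokes abstractly, and it also shows that the paper's extra assumption $\mathbb{E}[\delta N_{ij}]=0$ is unnecessary. One caution on your remainder discussion: ``restricting to $w_{ij}>0$'' is harmless only if $\delta N$ also vanishes on the zero-weight entries, because any budget moved there leaves an uncancelled first-order term $\frac{S^2}{N_{\mathrm{tot}}^2}\sum_{w_{ij}=0}\delta N_{ij}\ge 0$, so the statement genuinely requires all $w_{ij}>0$.
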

The proof is provided in App.~\ref{app:propositions}. Proposition~\ref{prop:alocation_errors} allows us to interpret two effects:
\begin{itemize}
    \item The optimal allocation derived in Eq.~\eqref{eq:optimal_allocation} does not explicitly account for the integer nature of measurement counts. In practice, measurement counts must be integer-valued, which introduces deviations from the ideal allocation, captured by the perturbations $\delta N_{ij}$. Proposition~\ref{prop:alocation_errors} shows that such deviations lead to a systematic increase in the variance due to the convex dependence on $1/N_{ij}$. This effect manifests as an upward shift in the variance proxy of both uniform and optimal allocation strategies, as observed in Fig.~\ref{fig:theory_variances}.
    \item The optimal oracle allocation assumes access to the exact weights $w_{ij}$, while in practice these weights must be inferred from noisy kernel estimates and intermediate models. This introduces an additional source of error not present in uniform allocation, as inaccuracies in the estimated weights lead to misallocation of measurement effort. As a result, in regimes where the underlying structure is weak (i.e., $w_{ij}$ are nearly homogeneous), these estimation errors can dominate the benefits of adaptivity, and uniform allocation may outperform adaptive strategies.
\end{itemize}

Taken together with Proposition~\ref{prop:optimal_vs_uniform}, these results reveal a multiple behavior of measurement allocation strategies. In highly structured settings, where the weights $w_{ij}$ are strongly heterogeneous, the optimal allocation is expected to provide substantial gains over uniform sampling, even in the presence of discretization and estimation noise. In moderately structured regimes, these gains persist but are partially reduced by implementation effects. In contrast, in low-structure regimes where the kernel contributions are nearly uniform, the benefits of adaptivity may diminish and can be outweighed by errors in weight estimation, making uniform allocation preferable due to its robustness. This transition between regimes is explored empirically in Section~\ref{sec:results} and in Fig.~\ref{fig:parameter_map}

Having identified the principal mechanisms that cause practical allocations to deviate from the oracle optimum, we now ask whether these deviations vanish in the large-budget limit. 
The following result provides an affirmative answer. 
Under mild regularity conditions, the adaptive allocation converges to the oracle Neyman allocation as progressively more measurements are collected.


\begin{theorem}[Consistency of Adaptive Allocation] \label{thm:allocation_consistency} 
Assume that: 
\begin{enumerate} 
\item the kernel estimator throughout the adaptive rounds $r$ is consistent, \[ \hat K_{ij}^{(r)} \xrightarrow{p} K_{ij} \qquad \text{for all } i<j; \] 
\item the SVM dual optimum is unique; 
\item Local Active Set Stability (Assumption~\ref{ass:active}) holds. 
\end{enumerate} 
Then the estimated allocation weights satisfy \[ \hat w_{ij}^{(r)} \xrightarrow{p} w_{ij} \qquad \text{for all } i<j. \] 
Consequently, the adaptive allocation asymptotically recovers the oracle Neyman allocation. \end{theorem}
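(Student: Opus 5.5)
The argument is a continuous-mapping one: the weights are continuous functions of the kernel at the true $K$, and the estimated kernel converges to $K$. The estimated weight at round $r$ is
\[
\hat w_{ij}^{(r)} = |\hat\alpha_i^{(r)}\hat\alpha_j^{(r)}|\,\hat\sigma_{ij}^{(r)},
\]
where $\hat\alpha^{(r)}$ is the dual optimum of Eq.~\eqref{eq:svm_dual} with $K$ replaced by $\hat K^{(r)}$, and $\hat\sigma_{ij}^{(r)}$ is the plug-in per-measurement standard deviation. In the Bernoulli case this is $\hat\sigma_{ij}^{(r)}=\sqrt{\hat K_{ij}^{(r)}(1-\hat K_{ij}^{(r)})}$, which is continuous in $K_{ij}$. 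For the general model of Eq.~\eqref{eq:noise_model_general}, I will assume $\sigma_{ij}$ is given by a consistent estimator or a continuous function of $K_{ij}$. By assumption~1 and the continuous mapping theorem, $\hat\sigma_{ij}^{(r)}\xrightarrow{p}\sigma_{ij}$. Since products and absolute values are continuous, it then suffices to show $\hat\alpha^{(r)}\xrightarrow{p}\alpha$.

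For that, I will establish that the solution map $K\mapsto\alpha(K)$ is continuous at the true $K$. A first-order expansion of $\alpha$ in $K$ is not available in general, because $\alpha$ need not be differentiable, so the plan uses a compactness argument in the spirit of Berge's maximum theorem. The feasible set
\[
\mathcal F=\{0\le\alpha_i\le C,\ \textstyle\sum_i\alpha_iy_i=0\}
\]
is compact and does not depend on $K$. The dual objective $g(\alpha,K)$ is jointly continuous, being polynomial in $(\alpha,K)$. Take any sequence $K_m\to K$ and optima $\alpha(K_m)$. Every subsequence has a further convergent subsequence with limit $\bar\alpha\in\mathcal F$. Passing to the limit in $g(\alpha(K_m),K_m)\ge g(\beta,K_m)$ for every $\beta\in\mathcal F$ shows that $\bar\alpha$ maximizes $g(\cdot,K)$. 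By uniqueness (assumption~2), $\bar\alpha=\alpha$, so $\alpha(K_m)\to\alpha$.

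One technical point: $\hat K^{(r)}$ may fail to be PSD, in which case the dual is nonconcave and its maximizer may not be unique. Berge's theorem still gives a set-valued map that is upper hemicontinuous at $K$, so any selection of the solver output converges to the unique $\alpha$ at $K$. Continuity at $K$ together with $\hat K^{(r)}\xrightarrow{p}K$ then gives $\hat\alpha^{(r)}\xrightarrow{p}\alpha$ by the continuous mapping theorem, applied to a map continuous at a single point.

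Assumption~3 strengthens this. On $\mathcal N(K)$ the active set is frozen, so $\alpha$ solves a fixed linear KKT system on the free set $\{0<\alpha_i<C\}$, with the bounded indices pinned at $0$ or $C$. Its solution is then a rational, hence locally Lipschitz, function of $K$. This also justifies the envelope derivative in Eq.~\eqref{eq:margin_derivative} and gives a rate: $\hat\alpha-\alpha=O_p(\|\hat K-K\|)$. Moreover, $\mathbb P(\hat K^{(r)}\in\mathcal N(K))\to1$, so with probability tending to one the estimated support set equals the true one. Consequently, zero weights on non-support pairs are eventually recovered exactly.

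Finally, consider the normalized allocation $\hat N_{ij}/N_{\mathrm{tot}}=\hat w_{ij}/\sum_{p<q}\hat w_{pq}$. This is continuous in $\hat w$ wherever $\sum_{p<q}w_{pq}>0$, which holds whenever at least two support vectors have $\sigma_{ij}>0$, i.e.\ in any nondegenerate SVM. Another application of the continuous mapping theorem therefore gives convergence to the oracle proportions in Eq.~\eqref{eq:optimal_allocation}. The rounding effects are $O(1/N_{\mathrm{tot}})$ in proportion, so they vanish in the limit, consistent with Proposition~\ref{prop:alocation_errors}.

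I expect the main obstacle to be the continuity of the argmax in the second step: handling possibly non-PSD estimates, and making sure uniqueness at the true $K$ alone is enough. I will try the subsequence/compactness argument first and invoke Assumption~\ref{ass:active} only for the local Lipschitz refinement.
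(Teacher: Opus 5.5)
Your proof is correct and follows the same continuous-mapping chain as the paper: $\hat K^{(r)}\xrightarrow{p}K$, then $\hat\alpha^{(r)}\xrightarrow{p}\alpha$, then $\hat w^{(r)}_{ij}\xrightarrow{p}w_{ij}$, then the normalized proportions under $W>0$. The difference is in how you justify the one nontrivial link, continuity of $K\mapsto\alpha(K)$ at the true kernel. The paper combines uniqueness with Assumption~\ref{ass:active} and cites perturbation theory for strictly convex quadratic programs \cite{bonnans2013perturbation}. You prove continuity directly from compactness of the $K$-independent feasible set, joint continuity of $g$, and uniqueness at $K$.

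Your route has four advantages:
\begin{itemize}
\item it is self-contained;
\item it does not need strict concavity, which the dual lacks in general when $K$ is only PSD, so uniqueness is the hypothesis actually available;
\item it handles non-PSD estimates;
\item it shows that Assumption~\ref{ass:active} is not needed for consistency itself. It only buys your refinements: an $O_p(\|\hat K^{(r)}-K\|)$ rate and exact recovery of the zero weights with probability tending to one.
\end{itemize}
The paper's route is shorter and stays inside the fixed-active-set framework of its sensitivity analysis.

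Three points to tighten:
\begin{itemize}
\item In the non-PSD case your selection must be a global maximizer. If the solver returns only a KKT point, use instead that the KKT correspondence over the fixed polyhedron $\mathcal F$ is closed, and that at the PSD kernel $K$ its only element is $\alpha$.
\item The Lipschitz refinement needs the bordered matrix $\bigl(\begin{smallmatrix} Q_{FF} & y_F\\ y_F^{\top} & 0\end{smallmatrix}\bigr)$ to be nonsingular, where $Q_{ij}=y_iy_jK_{ij}$ and $F$ is the free set. This does follow from uniqueness, because uniqueness forces $Q_{FF}$ to be positive definite on $y_F^{\perp}$ when $F\neq\emptyset$, but you should say so.
\item Under the multinomial step of Algorithm~\ref{alg:adaptive}, the realized proportions deviate from $p_{ij}$ by $O_p(B_r^{-1/2})$, not $O(1/N_{\mathrm{tot}})$. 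This still vanishes, so your conclusion is unaffected.
\end{itemize}
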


The consistency result concerns the reducible finite-measurement component of the estimator variance. In the presence of an irreducible hardware-induced variance floor, the same argument applies to the limiting effective kernel, while the residual variance cannot be removed by allocation.
Theorem~\ref{thm:allocation_consistency} establishes that the adaptive procedure is not merely heuristic. As progressively more measurements are collected, the kernel estimates converge to the true kernel matrix, which in turn implies convergence of the estimated dual coefficients and allocation weights. The adaptive allocation therefore approaches the oracle allocation that would be obtained with complete knowledge of the underlying kernel.

\begin{corollary}[Asymptotic Oracle Efficiency] \label{cor:efficiency} 
Let \[ V = \sum_{i<j} \frac{w_{ij}^{2}} {N_{ij}} \] denote the variance proxy introduced in Eq.~\eqref{eq:margin_min}. 
Under the assumptions of Theorem~\ref{thm:allocation_consistency}, the variance proxy associated with the adaptive allocation converges to that of the oracle Neyman allocation, i.e., 
\[ \frac{\hat V^{(r)}}{V^\star} \xrightarrow{p} 1. \] 
\end{corollary}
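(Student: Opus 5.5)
Throughout I assume, as the statement intends, that $\hat V^{(r)}$ is the proxy $\sum_{i<j} w_{ij}^2/N_{ij}$ with the \emph{true} weights $w_{ij}$ in the numerators, evaluated at the adaptive allocation $\hat N_{ij}^{(r)} = N_{\mathrm{tot}}\,\hat w_{ij}^{(r)}/\sum_{p<q}\hat w_{pq}^{(r)}$. The proof reduces to a continuous-mapping argument on top of Theorem~\ref{thm:allocation_consistency}. Write the allocation fractions as $\hat\pi_{ij}^{(r)} := \hat w_{ij}^{(r)}/\sum_{p<q}\hat w_{pq}^{(r)}$ and $\pi_{ij}^\star := w_{ij}/\sum_{p<q} w_{pq}$, so that $\hat N_{ij}^{(r)} = N_{\mathrm{tot}}\hat\pi_{ij}^{(r)}$ and $N_{ij}^\star = N_{\mathrm{tot}}\pi^\star_{ij}$. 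Substituting,
\[
\frac{\hat V^{(r)}}{V^\star} = \frac{\sum_{i<j} w_{ij}^2/\hat\pi_{ij}^{(r)}}{\bigl(\sum_{i<j} w_{ij}\bigr)^2}.
\]
The factor $N_{\mathrm{tot}}$ cancels, so the ratio depends only on the fractions and not on how the budget grows across rounds.

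The first step is to pass from the weights to the fractions. Theorem~\ref{thm:allocation_consistency} gives $\hat w^{(r)}_{ij}\xrightarrow{p} w_{ij}$ for each of the finitely many pairs, hence jointly. Assuming $S:=\sum_{p<q}w_{pq}>0$ (a nondegenerate SVM with at least two support vectors), the map $w\mapsto w/\sum w$ is continuous at $w$, and the continuous mapping theorem yields $\hat\pi^{(r)}_{ij}\xrightarrow{p}\pi^\star_{ij}$.

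The second step passes from the fractions to the ratio, and the main obstacle is entries with $w_{ij}=0$, for instance pairs involving non-support vectors. There $\pi^\star_{ij}=0$ while the term $w_{ij}^2/\hat\pi^{(r)}_{ij}$ has the form $0/0$. I would handle this by restricting the sum to the support $\mathcal S=\{(i,j):w_{ij}>0\}$ and adopting the convention that terms with $w_{ij}=0$ contribute zero. This is harmless, since $w_{ij}^2\,\mathrm{Var}$ contributes nothing to the proxy. Equivalently, one can note that a practical implementation assigns a positive floor of pilot measurements, so every $\hat\pi^{(r)}_{ij}>0$ and the zero-weight terms vanish identically. On $\mathcal S$ the function $\pi\mapsto\sum_{\mathcal S} w_{ij}^2/\pi_{ij}$ is continuous at $\pi^\star$, because every $\pi^\star_{ij}$ there is bounded away from zero.

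A second application of the continuous mapping theorem then gives
\[
\sum_{\mathcal S} \frac{w_{ij}^2}{\hat\pi_{ij}^{(r)}} \xrightarrow{p} \sum_{\mathcal S}\frac{w_{ij}^2}{\pi^\star_{ij}} = S\sum_{\mathcal S} w_{ij} = S^2,
\]
and therefore $\hat V^{(r)}/V^\star\xrightarrow{p}1$.

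Finally, integer rounding adds a relative error of order $1/N_{ij}^\star$, which by Proposition~\ref{prop:alocation_errors} vanishes as $N_{\mathrm{tot}}\to\infty$. It therefore does not affect the limit.
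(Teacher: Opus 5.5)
Your argument is correct and follows the same route as the paper: Theorem~\ref{thm:allocation_consistency} gives convergence in probability of the allocation proportions, and the continuous mapping theorem carries this over to the variance proxy. Your version is tighter than the paper's on two points. First, you cancel $N_{\mathrm{tot}}$, so the ratio is a fixed function of the fractions; the paper's step ``$\hat V^{(r)}\xrightarrow{p}V^\star$, then divide by the nonzero constant $V^\star$'' is delicate, because $V^\star=(\sum_{i<j}w_{ij})^2/N_{\mathrm{tot}}$ is not constant as the budget grows. Second, you restrict to the support $\{w_{ij}>0\}$ instead of assuming all allocations are strictly positive, an assumption that fails at the oracle point for pairs involving non-support vectors.
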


Corollary~\ref{cor:efficiency} shows that the convergence established in Theorem~\ref{thm:allocation_consistency} is meaningful from a statistical perspective. As the measurement budget increases, the variance proxy achieved by the adaptive allocation approaches that of the oracle Neyman allocation. Thus, although practical implementations necessarily rely on estimated quantities, they asymptotically recover the efficiency of the idealized oracle strategy. Proofs of the above results are provided in App.~\ref{app:consistency_limit}.

\subsection{Active Set Instability as Missing Sensitivity}\label{sec:instability}
Finally, we return to a key assumption underlying the oracle analysis, Assumption~\ref{ass:active}: Local Active Set Stability.
As discussed in Sec.~\ref{sec:instability_measure}, perturbations in the estimated kernel matrix may alter the support-vector partition. 
In particular, points located close to the margin and associated with substantial uncertainty in the estimated kernel matrix are natural candidates for support-vector transitions.

When such transitions occur, kernel entries that have negligible sensitivity under the current active set may become influential after a point enters or leaves the support-vector set. 
Consequently, a purely fixed-active-set sensitivity analysis may underestimate the importance of kernel entries involving near-margin points.

This phenomenon is not merely theoretical. 
During the adaptive procedure (introduced in the next section), both the estimated kernel matrix and the corresponding SVM solution evolve across rounds, potentially inducing changes in support-vector membership. Empirical evidence of this behaviour can be observed in Fig.~\ref{fig:metrics-grid}, where support-vector recovery varies across adaptive rounds rather than remaining constant.

\begin{proposition}[Instability as Missing Sensitivity] \label{prop:missing_sensitivity} 
Let \(Z_i\in\{0,1\}\) indicate whether point \(i\) becomes active under a perturbation of the estimated kernel matrix. Assume that 
\[ \Pr(Z_i=1) \approx P_i, \] 
where \(P_i\) is the instability probability defined in Eq.~\eqref{eq:prob_sv}.
Let \(\alpha_i^{new}\) denote the dual coefficient after such a perturbation and define the transition-induced sensitivity 
\[ h_{ij} := \alpha_i^{new} \alpha_j^{new} y_i y_j \, Z_i Z_j . \] 
Under the bounded-dual assumption 
\[ 0 \leq \alpha_i \leq C, \] 
the expected magnitude of this transition-induced sensitivity satisfies 
\begin{equation}
\mathbb E[|h_{ij}|] \lesssim C^2 P_iP_j .    
\end{equation}
\end{proposition}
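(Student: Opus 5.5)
The plan is to bound $|h_{ij}|$ pointwise and then take expectations. Since $|y_iy_j|=1$, we have $|h_{ij}| = |\alpha_i^{new}|\,|\alpha_j^{new}|\,Z_iZ_j$. The bounded-dual assumption applies to the perturbed SVM as well, because the box constraints $0\le\alpha_i\le C$ in Eq.~\eqref{eq:svm_dual} hold for every feasible dual solution, including the one computed from the perturbed kernel. This gives the deterministic bound
\[
|h_{ij}| \le C^2 Z_i Z_j ,
\]
and therefore $\mathbb E[|h_{ij}|] \le C^2\,\mathbb E[Z_iZ_j] = C^2 \Pr(Z_i=1,Z_j=1)$. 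This step is rigorous and contains no approximation.

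The remaining task is to relate the joint transition probability to the product $P_iP_j$, and I expect this to be the main obstacle. The events $Z_i$ and $Z_j$ are driven by $y_i\widehat f(x_i)$ and $y_j\widehat f(x_j)$. By Eq.~\eqref{eq:decision_function_variance}, these decision values depend on the rows $\widehat K_{i\cdot}$ and $\widehat K_{j\cdot}$ of the estimated kernel. Those rows share only the single entry $\widehat K_{ij}$, and independent kernel entries are measured independently. To first order, and treating the dual coefficients as fixed at their current estimates (the same linearization that underlies Eq.~\eqref{eq:decision_function_variance}), the two decision values are therefore driven by almost disjoint sets of independent noise variables.

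With that in hand, I would argue approximate independence, $\Pr(Z_i=1,Z_j=1)\approx \Pr(Z_i=1)\Pr(Z_j=1)\approx P_iP_j$. The coupling through the shared entry contributes only a correction of relative order $(\hat\alpha_j)^2\mathrm{Var}(\widehat K_{ij})/\mathrm{Var}(\widehat f(x_i))$. This correction is small whenever the decision variance is spread over many entries. Combining it with the stated modeling assumption $\Pr(Z_i=1)\approx P_i$ yields $\mathbb E[|h_{ij}|]\lesssim C^2P_iP_j$, where $\lesssim$ absorbs both the approximation in $\Pr(Z_i=1)\approx P_i$ and this weak-dependence correction.

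If a cleaner statement is wanted, I would also note a dependence-free fallback. Since $\Pr(Z_i=1,Z_j=1)\le\min(P_i,P_j)$, the bound always holds up to replacing $P_iP_j$ by $\min(P_i,P_j)$. The product form is precisely the independence heuristic, which is consistent with the proposition's use of $\lesssim$ rather than $\le$.
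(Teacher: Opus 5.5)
Your proposal has the same structure as the paper's proof. The box constraints give $|h_{ij}|\le C^2 Z_iZ_j$, taking expectations gives $C^2\Pr(Z_i=1,Z_j=1)$, and a weak-dependence approximation $\Pr(Z_i=1,Z_j=1)\lesssim P_iP_j$ finishes the argument; the paper simply posits that last step, whereas you motivate it through the single shared entry $\widehat K_{ij}$ and add the dependence-free $\min(P_i,P_j)$ fallback.

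One caveat on that motivation: in your fixed-dual linearization the shared entry enters $y_i\widehat f(x_i)$ and $y_j\widehat f(x_j)$ with the same sign (their covariance is $\hat\alpha_i\hat\alpha_j\mathrm{Var}(\widehat K_{ij})\ge 0$), so the joint probability actually sits slightly \emph{above} $P_iP_j$, and the relative excess grows as the thresholds move into the tails. The product form therefore holds only as an approximation, not as an inequality.
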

Proposition~\ref{prop:missing_sensitivity} (proof provided in App.~\ref{app:propositions}) complements the fixed active set sensitivity analysis.
It gives a theoretical motivation for the introduction of an additional term, complementary to the sensitivity-based one, in the allocation strategy proposed in the next section. In this way we cover two sources of uncertainty: smooth perturbations within a locally stable active set region and discrete changes in support-vector membership. This combination allows the adaptive allocation to balance refinement of the current classifier against exploration of regions where the support-vector structure remains uncertain.

\section{Adaptive Measurement Allocation}\label{sec:adaptive}
The theoretical results of the previous section provide guidance on how measurements should be distributed across kernel entries. 
In this section, we translate these insights into a practical adaptive algorithm that alternates between kernel estimation, SVM training, and measurement reallocation. 
The resulting procedure progressively refines the kernel matrix in the most influential regions while incorporating both margin sensitivity and active set instability information.

\subsection{Adaptive Allocation Algorithm Overview}

The central idea of our approach is to treat kernel estimation not as an isolated statistical task, but as an integral component of the downstream SVM optimization. In Section~\ref{sec:problem_formulation}, we derived an optimal measurement allocation that minimizes the variance of the classifier under the assumption of oracle access to the true underlying machine learning model. 

In practice, however, this assumption does not hold: both the kernel and the associated dual solution must be inferred from noisy measurements, and the active set itself may change during the estimation process. The adaptive strategy therefore aims to approximate the oracle allocation in a sequential manner, while explicitly accounting for the uncertainty in the support-vector structure.

The algorithm proceeds as a sequence of measurement--update--retraining cycles:

\begin{enumerate}
    \item \textbf{Pilot estimation.}  
    A small, uniform number of measurements is allocated to all independent kernel entries, producing an initial estimate $\widehat{K}^{(0)}$. This estimate is sufficient to train an initial SVM and extract coarse structural information, in particular approximate support vectors and margin geometry.

    \item \textbf{Sensitivity and instability assessment.}  
    Using the current kernel estimate $\widehat{K}^{(r)}$, the algorithm evaluates which entries of the kernel matrix are most important for the classifier. Two complementary signals are considered:
    \begin{itemize}
        \item \emph{Geometric sensitivity}, derived from the margin derivative, capturing how perturbations of $K_{ij}$ affect the classifier under a fixed active set;
        \item \emph{Active set instability}, capturing the probability that noise in the kernel may induce changes in support-vector membership.
    \end{itemize}

    \item \textbf{Score-based allocation.}  
    These signals are combined into a normalized allocation score for each kernel entry. A portion of the remaining measurement budget is then distributed proportionally to these scores. Since the resulting allocation is generally non-integer, a multinomial sampling step is used to obtain integer measurement counts while preserving the target proportions in expectation.

    \item \textbf{Kernel update and retraining.}  
    The kernel estimate is updated using the newly acquired measurements, and a new SVM is trained to refine the dual coefficients and structural estimates.

    \item \textbf{Early stopping.}  
    The procedure terminates when the measurement budget is exhausted or when the dual coefficients stabilize, indicating convergence of the classifier.
\end{enumerate}

By iteratively refining both the kernel estimate and the allocation strategy, the algorithm progressively concentrates measurements on the most entries with the greatest influence on the classifier. We provide both a high-level schematic of the algorithm in Fig.~\ref{fig:adaptive_schematic} and the algorithm pseudocode in Algorithm \ref{alg:adaptive}.

\begin{figure}[t!]
    \centering
    \includegraphics[width=0.8\linewidth]{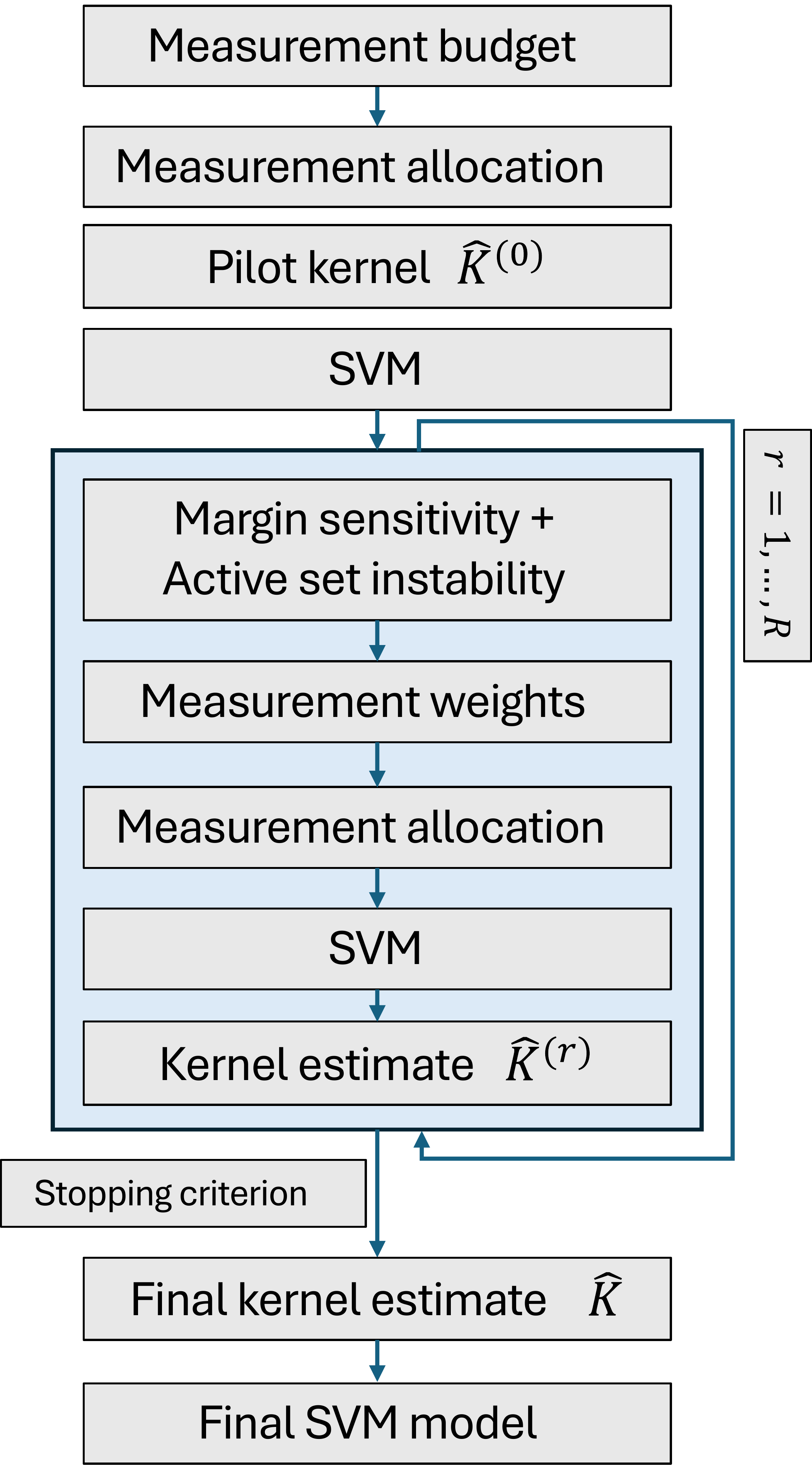}
    \caption{Schematic overview of the adaptive measurement procedure. A small pilot budget produces an initial kernel estimate, which is used to train a preliminary SVM model. Margin‑based sensitivity and active‑set instability are combined to assign measurement weights to individual kernel entries. New measurements are allocated accordingly, the kernel estimate is updated, and the process repeats until the stopping criterion is met, yielding a final SVM model closely matching the true‑kernel solution.}
    \label{fig:adaptive_schematic}
\end{figure}

\begin{algorithm}[t]
\caption{Adaptive Measurement Allocation for Kernelized SVMs}
\label{alg:adaptive}
\begin{algorithmic}[1]
\STATE \textbf{Input:} training data $\{(x_i,y_i)\}_{i=1}^n$, total budget $N_{\mathrm{tot}}$, max rounds $R$, pilot measurements $m_0$, mix weight $\lambda$, tolerance $\varepsilon$
\STATE \textbf{Initialize:} $N_{ij}=0$ for all $i<j$ \quad (measurement counts per entry)

\vspace{0.3em}
\STATE \textbf{Pilot phase:}
\FOR{$i<j$}
    \STATE Allocate $m_0$ measurements to $(i,j)$; set $N_{ij}=m_0$ and $\widehat{K}_{ij} = $ measured fraction of $1$'s
\ENDFOR
\STATE Train initial SVM on $\widehat{K}^{(0)}$ to obtain dual solution $\alpha^{(0)}$

\vspace{0.3em}
\FOR{$r = 1$ to $R$}
    \STATE // using current $\widehat{K}^{(r-1)}$ and duals $\alpha^{(r-1)}$
    \STATE Compute margin residuals $\Delta_i = y_i f(x_i) - 1$ and instability probabilities $P_i$ for all points
    \STATE Compute sensitivity weights $w_{ij}^{\mathrm{sens}}$ for all independent entries $(i,j)$
    \STATE Compute instability weights $w_{ij}^{\mathrm{inst}}$ for all independent entries $(i,j)$
    \STATE Normalize $w^{\mathrm{sens}}$ and $w^{\mathrm{inst}}$ over all $(i,j)$
    \STATE $S_{ij} \gets (1-\lambda)\,\widetilde{w}_{ij}^{\mathrm{sens}} + \lambda\,\widetilde{w}_{ij}^{\mathrm{inst}}$ \hfill \textit{// combined score}
    \STATE Let $B_r$ be the measurement budget for round $r$. Compute $p_{ij} = \frac{s_{ij}}{\sum_{k<l} s_{kl}}$ for all $(i,j)$ and sample $\{\Delta N_{ij}\} \sim \mathrm{Multinomial}(B_r, \{p_{ij}\})$
    \STATE Update $N_{ij} \gets N_{ij} + \Delta N_{ij}$ and $\widehat{K}$ with new measurements; retrain SVM to get $\alpha^{(r)}$
    \STATE $\delta \gets \frac{\|\alpha^{(r)} - \alpha^{(r-1)}\|}{\|\alpha^{(r-1)}\|}$
    \IF{$\delta < \varepsilon$}
        \STATE \textbf{break}
    \ENDIF
\ENDFOR

\STATE \textbf{return} final $\widehat{K}$ and trained SVM
\end{algorithmic}
\end{algorithm}

\subsection{Allocation Scores and Integer Sampling}

The allocation mechanism is based on a pairwise score assigned to each kernel entry. This score combines a theoretically motivated sensitivity term with an instability correction that accounts for active set uncertainty.

\paragraph{Margin sensitivity}
Building upon the sensitivity analysis performed in Sec.~\ref{sec:problem_formulation} we take the optimal allocation score in the Bernoulli measurement-based kernel, Eq.~\eqref{eq:adaptive_allocation} and introduce a sensitivity weight,
\begin{equation}\label{eq:sens_allocation}
w_{ij}^{\mathrm{sens}} \;=\; |\alpha_i \alpha_j| \,\sqrt{\widehat{K}_{ij}(1-\widehat{K}_{ij})},
\end{equation}
which is designed to distribute measurements according to the optimal oracle allocation.

\paragraph{Active set instability}
The sensitivity-based allocation is optimal only when the active set remains unchanged. To account for possible support-vector transitions, we introduce an instability term. 

We take $P_i$ (Eq.~\eqref{eq:prob_sv}), as a proxy for the likelihood that the point $x_i$ belongs to, or may transition into, the support-vector set. In practice, we approximate this value with
\begin{equation}
    P_i \approx \Phi\!\left( -\frac{\widehat{\Delta}_i}{\sigma_{f,i}} \right)
\end{equation}
where $\widehat{\Delta}_i = y_i \widehat f(x_i) - 1$ is the plug‑in margin residual based on the current estimated kernel $\widehat{K}$, $\sigma^2_{f,i} = \mathrm{Var}(\hat{f}(x_i))$ is the variance of the decision function estimation for sample $x_i$ and $\Phi$ is the standard normal CDF.
Based on Propostion~\ref{prop:missing_sensitivity}, we introduce the pairwise instability weight as
\begin{equation}
    w_{ij}^{\mathrm{inst}} \;=\; P_i P_j \, C^2,
\end{equation}
highlighting entries whose noise may induce structural changes in the classifier.

\paragraph{Normalized score}
Since the two components are not inherently comparable, we normalize them before combining. Using, e.g., $\ell_1$ normalization over independent entries,
\[
\widetilde{w}_{ij}^{\mathrm{sens}} = \frac{w_{ij}^{\mathrm{sens}}}{\sum_{k<l} w_{kl}^{\mathrm{sens}}}, 
\quad
\widetilde{w}_{ij}^{\mathrm{inst}} = \frac{w_{ij}^{\mathrm{inst}}}{\sum_{k<l} w_{kl}^{\mathrm{inst}}},
\]
the final allocation score is defined as
\begin{equation}\label{eq:sens_inst}
s_{ij}
=
(1-\lambda)\,\widetilde{w}_{ij}^{\mathrm{sens}}
+
\lambda\,\widetilde{w}_{ij}^{\mathrm{inst}},
\end{equation}
where $\lambda \in [0,1]$ controls the trade-off between variance reduction and structural stability.
An ablation over the mixing parameter \(\lambda\) is reported in App.~\ref{app.ablation_lambda}. The results show that intermediate values of \(\lambda\) often improve both decision-function accuracy and support-vector recovery relative to purely sensitivity-based allocation.

\paragraph{Integer allocation via multinomial sampling}
Given a budget $B$ for the current round, the ideal allocation is proportional to $s_{ij}$, but generally yields non-integer values. To obtain integer measurement counts while preserving the allocation proportions, we define
\[
p_{ij} = \frac{s_{ij}}{\sum_{k<l} s_{kl}},
\]
and draw
\[
\{ \Delta N_{ij} \}_{i<j} \sim \mathrm{Multinomial}(B, \{p_{ij}\}).
\]

This procedure has several important properties:
\begin{itemize}
    \item \emph{Unbiasedness:} $\mathbb{E}[\Delta N_{ij}] = B\, p_{ij}$, matching the desired allocation in expectation;
    \item \emph{Exact budget:} $\sum_{i<j} \Delta N_{ij} = B$ always holds;
    \item \emph{Exploration:} stochastic allocation prevents systematic under-sampling of low-weight entries.
\end{itemize}

After sampling, the new measurements are incorporated into the kernel estimate, and the process repeats.

\subsection{Stopping criterion}
The adaptive procedure iteratively refines the kernel estimate and retrains the SVM after each measurement round. The stopping mechanism is triggered either when the measurement budget is exhausted or when the learned classifier has effectively converged. To detect convergence, we monitor the stability of the dual coefficients. Since both the SVM decision function and the margin depend only on the quantity $y_i \alpha_i$, stabilization of this vector provides a good proxy for convergence of the classifier. In particular, changes in $y_i \alpha_i$ directly translate into changes in both the decision boundary and the margin geometry.

Let $\alpha^{(r)}$ denote the dual coefficients obtained after round $r$, and similarly $\alpha^{(r-1)}$ those of the previous round. We define the relative change between successive iterates as
\begin{equation}\label{eq:stoping_criterion}
    \delta^{(r)}
= 
\frac{
\left\| \alpha^{(r)} - \alpha^{(r-1)} \right\|_2
}{
\left\| \alpha^{(r-1)} \right\|_2 + 10^{-12}
}.
\end{equation}
The small denominator offset prevents numerical issues when the dual coefficients are close to zero in early rounds. When this relative change falls below a prescribed tolerance, $\delta^{(r)} < \varepsilon$, we conclude that further measurements are unlikely to significantly affect the classifier. At this stage, additional refinement of the kernel matrix has diminishing impact on both the decision function and the margin, and the algorithm terminates early. 

If instead the dual coefficients continue to change appreciably, the procedure proceeds to the next round, allocating additional measurements based on the updated sensitivity and instability estimates. This adaptive stopping criterion ensures that measurement effort is not wasted once the classifier has stabilized.

\subsection{Cost Comparison and Practical Regimes}\label{sec:cost_theory}
To compare the efficiency of adaptive and uniform measurement strategies, we model the total computational cost as the sum of a kernel estimation component, proportional to the number of measurement shots, and a standard component, dominated by retraining the SVM.

The kernel estimation cost scales proportionally with the total number of measurements, $c_q N_{\mathrm{tot}}$, while the standard cost scales as $c_c (R+1)n^3$, reflecting the worst‑case complexity of solving the dual SVM problem with a dense kernel matrix \cite{chang2011libsvm}. An adaptive scheme performing one pilot and $R$ refinement rounds therefore requires $R+1$ SVM trainings, while the uniform scheme performs a single training.

The total costs are given by
\[
    C_{\mathrm{uniform}} = c_q N_{\mathrm{tot}} + c_c n^3,
\]
\[
    C_{\mathrm{adaptive}} = c_q r N_{\mathrm{tot}} + (R+1)c_c n^3,
\]
where $r \leq 1$ denotes the fraction of the measurement budget effectively used under early stopping.

To analyze scaling with dataset size, we express the total measurement budget as
\[
N_{\mathrm{tot}} = \frac{n(n-1)}{2}\bar{N},
\]
where $\bar{N}$ is the average number of measurements per kernel entry.

Introducing the relative cost ratio $\tau = c_c / c_q$, the break-even condition $C_{\mathrm{adaptive}} = C_{\mathrm{uniform}}$ yields a critical value
\begin{equation}\label{eq:tau_critical}
    \tau^* = \frac{(n-1)(1-r)}{2n^2 R}\,\bar{N}.
\end{equation}

This expression highlights the fundamental trade-off between measurement cost and standard computation. The kernel estimation component scales as $\mathcal{O}(n^2 \bar{N})$, reflecting the number of kernel entries, while the standard SVM component scales as $\mathcal{O}(n^3)$. As a result, adaptive strategies reduce the estimation cost by concentrating measurements, but incur additional overhead due to repeated model retraining.

The analytical expression above can be interpreted using parameters obtained from the adaptive experiments. In particular, early-stopping results given in Sec.~\ref{sec:early_stop} provide realistic values of the effective shot fraction $r$ and the number of adaptive rounds $R$. Figure~\ref{fig:cost_comparison} visualizes the resulting critical values $\tau^*$ as a function of dataset size $n$ for representative configurations.

The curves separate regimes in which adaptive or uniform measurement is more cost-efficient. For $\tau < \tau^*$, the reduction in measurement cost achieved by adaptive allocation outweighs the additional retraining overhead, making the adaptive strategy favorable. For $\tau > \tau^*$, the classical cost dominates and uniform sampling becomes more efficient.

In typical quantum machine learning settings, the cost of quantum measurements is substantially higher than that of classical computation, implying small values of $\tau$. At the same time, practical applications often involve moderate dataset sizes ($n \approx 10^1$–$10^3$). In this regime, the corresponding critical values $\tau^*$ tend to be significantly larger than realistic values of $\tau$, placing most practical scenarios deep in the adaptive-advantage region.

Consequently, the adaptive measurement scheme offers a dual benefit: it not only improves the accuracy of the learned classifier, as demonstrated in the previous sections, but also reduces overall computational cost by exploiting early stopping and concentrating measurements on the most informative kernel entries.

\begin{figure}[t!]
    \centering
    \includegraphics[width=1.0\linewidth]{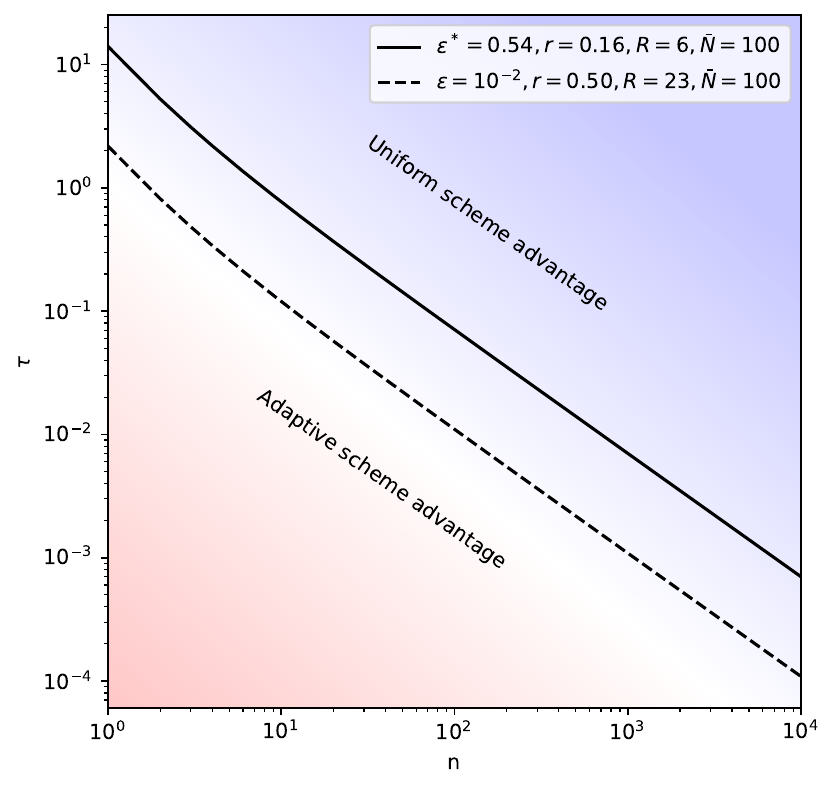}
    \caption{Critical values of the relative cost parameter $\tau$ obtained from the condition $C_{\mathrm{adaptive}}/C_{\mathrm{uniform}} = 1$ as a function of the training data size.
Each curve corresponds to a different choice of stopping threshold $\varepsilon$, effective shot fraction $r$, and number of adaptive rounds $R$.
The curves separate two regimes: below a curve, the adaptive allocation is more cost‑efficient than the uniform strategy; above it, the uniform scheme is cheaper.
Regions close to the curves indicate parameter settings for which the two approaches have comparable total cost.}
    \label{fig:cost_comparison}
\end{figure}

\section{Numerical Results and Discussion}\label{sec:results}
\noindent In this section, we present the empirical evaluation of the proposed adaptive measurement strategy, examining its behavior under fixed‑budget conditions, its performance when equipped with early stopping, and its sensitivity to key hyperparameters. We further relate these findings to the quantum–classical cost model, providing a unified interpretation of adaptive measurement efficiency under practical resource constraints.

\subsection{Experimental Setup and Metrics}

We evaluate the proposed adaptive measurement strategy on both synthetic and application-driven kernel settings. Synthetic datasets are generated from Gaussian mixture models with controllable structure, and kernel matrices are constructed using radial basis function (RBF) kernels. In addition, we consider simulated quantum kernel matrices obtained from real-world data, using feature maps based on block-encoding embeddings applied to the Indian Pines hyperspectral dataset for crop classification~\cite{large_scale}. These kernels correspond to quantum circuits of varying size, ranging from 2 to 100 qubits, providing a realistic spectrum of kernel structures and noise sensitivities. Unless stated otherwise, results are averaged over 1000 independent runs.

Experiments were conducted using \texttt{ShotWise} (\url{https://github.com/ESA-PhiLab/shotwise}), an open-source Python package developed alongside this work to support reproducible research on adaptive measurement allocation for noisy kernel learning. The package implements the proposed algorithms together with utilities for simulation, benchmarking, and quantum hardware execution.

\paragraph{Evaluation Metrics}
We assess performance using several complementary measures:

\emph{Kernel reconstruction.}
We measure the root-mean-square error between the true and estimated kernel,
\[
\mathrm{RMSE}(K) = \sqrt{\frac{1}{n^2} \sum_{i,j} (K_{ij} - \widehat{K}_{ij})^2},
\]
as well as a support-vector-restricted variant $\mathrm{RMSE}(K_{\mathrm{SV}})$ focusing on entries involving true support vectors.

\emph{Support-vector recovery.}
We quantify agreement between true and estimated support-vector sets using the Jaccard index and a weighted variant based on the signed dual coefficients.

\emph{Classifier geometry.}
We measure the relative margin error and the decision-function RMSE, normalized by the true margin. These metrics quantify distortions in the separating hyperplane and deviations in decision values induced by kernel estimation noise.

In settings where a direct comparison between uniform and adaptive strategies is required, we additionally use the relative improvement metric
\begin{equation}\label{eq:relative_RMSE}
\Delta_{\mathrm{RMSE}} =
\frac{\mathrm{RMSE}_f^{\mathrm{uniform}} - \mathrm{RMSE}_f^{\mathrm{adaptive}}}
     {\mathrm{RMSE}_f^{\mathrm{uniform}}},
\end{equation}
which measures the reduction in decision-function error achieved by adaptive allocation relative to the uniform baseline. Positive values indicate that adaptive sampling yields lower error, while negative values correspond to regimes where uniform allocation performs better.

We consider both fixed-budget scenarios, where all methods use the same total number of measurements, and early-stopping settings, where adaptive allocation may terminate before exhausting the budget.

\subsection{Fixed-Budget Performance Experiments}\label{sec:fixed_budget}
\begin{figure*}[t!]
  \centering

  \begin{subfigure}[t]{0.48\textwidth}
    \centering
    \includegraphics[width=\textwidth]{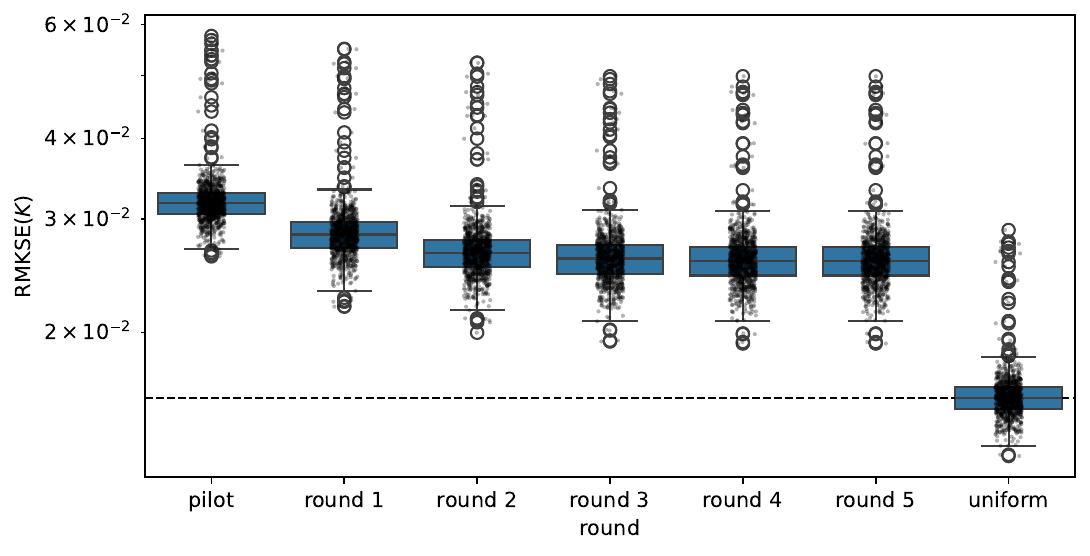}
    \caption{Global kernel error $\mathrm{RMSE}(K)$ (log scale).}
    \label{fig:grid-rmse-global}
  \end{subfigure}\hfill
  \begin{subfigure}[t]{0.48\textwidth}
    \centering
    \includegraphics[width=\textwidth]{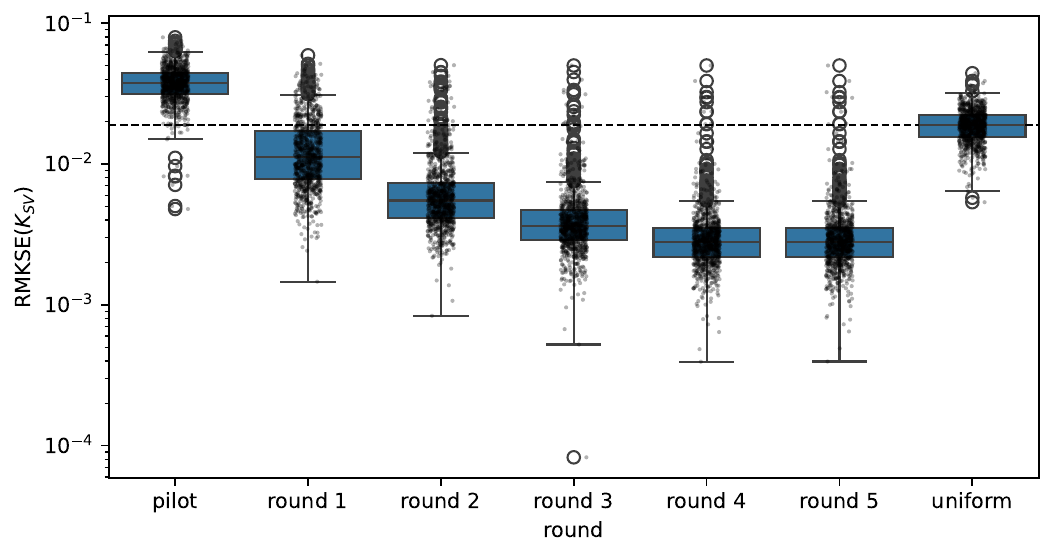}
    \caption{SV-block error $\mathrm{RMSE}(K_{\mathrm{SV}})$ (log scale).}
    \label{fig:grid-rmse-sv}
  \end{subfigure}

  \vspace{0.75em}

  \begin{subfigure}[t]{0.48\textwidth}
    \centering
    \includegraphics[width=\textwidth]{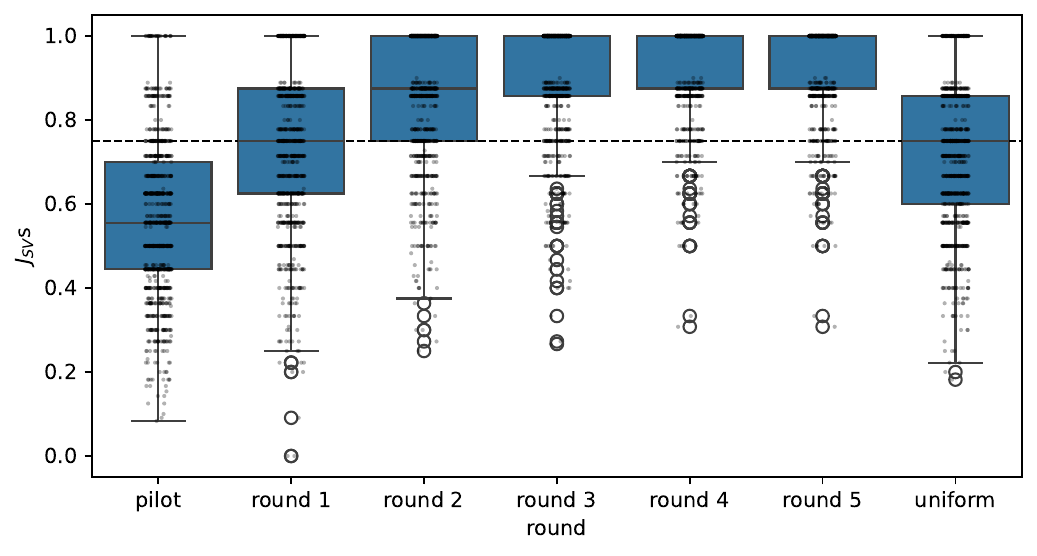}
    \caption{Jaccard similarity of support-vector sets.}
    \label{fig:grid-jaccard}
  \end{subfigure}\hfill
  \begin{subfigure}[t]{0.48\textwidth}
    \centering
    \includegraphics[width=\textwidth]{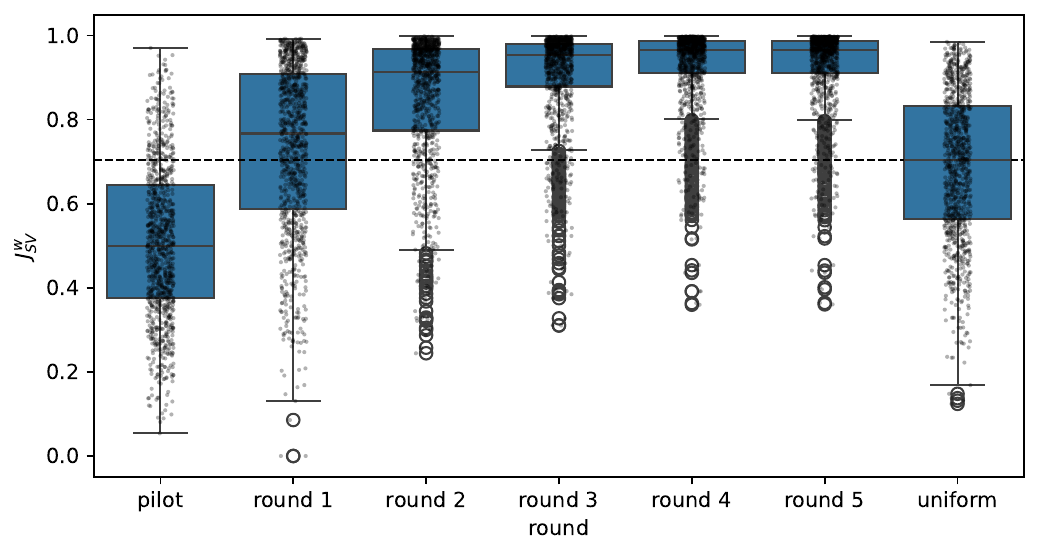}
    \caption{Weighted Jaccard on dual magnitudes $\{|a_i y_i|\}$.}
    \label{fig:grid-weighted-jaccard}
  \end{subfigure}

  \vspace{0.75em}

  \begin{subfigure}[t]{0.48\textwidth}
    \centering
    \includegraphics[width=\textwidth]{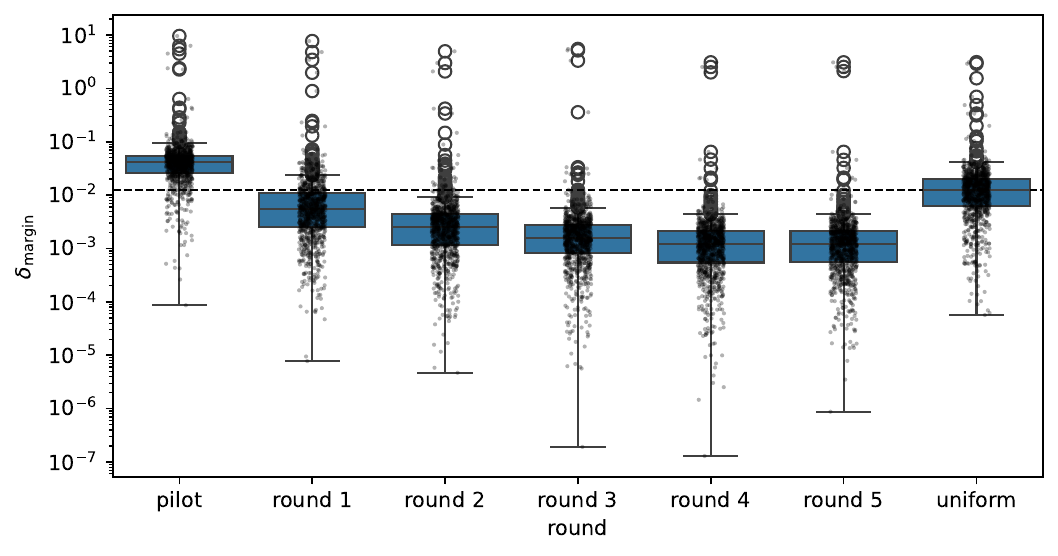}
    \caption{Relative margin error $\delta_{\mathrm{margin}}$.}
    \label{fig:grid-margin-error}
  \end{subfigure}\hfill
  \begin{subfigure}[t]{0.48\textwidth}
    \centering
    \includegraphics[width=\textwidth]{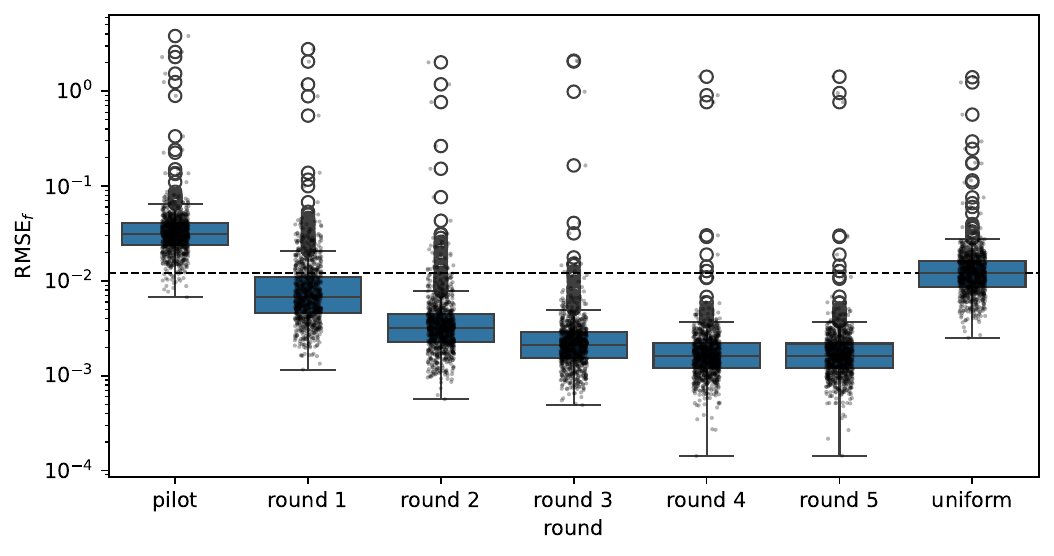}
    \caption{Decision-function $\mathrm{RMSE}$ (normalized by $\|w_{\mathrm{true}}\|$).}
    \label{fig:grid-func-rmse}
  \end{subfigure}

  \caption{
  Summary of metrics comparing \emph{uniform} and \emph{adaptive} measurement schemes
  across the algorithm stages (pilot, round~$1$,\dots, round~$R$).
  \textbf{Row 1:} Global kernel reconstruction error $\mathrm{RMSE}(K)$ and
  SV-block error $\mathrm{RMSE}(K_{\mathrm{SV}})$, both on a logarithmic scale.
  \textbf{Row 2:} Support-vector set agreement (Jaccard, $J_{SV}$) and dual‑weighted
  agreement (Weighted Jaccard, $J_{SV}^w$).
  \textbf{Row 3:} Relative margin error $\delta_{\mathrm{margin}}$ and
  decision-function $\text{RMSE}_f$ normalized by the true margin.
  Horizontal lines show the median of the uniform baseline.
  Support-vector indices correspond to the SVM trained on the true kernel $K$.
  }
  \label{fig:metrics-grid}
\end{figure*}
The fixed-budget experiments reveal a consistent and pronounced advantage of the adaptive measurement scheme over the uniform baseline. Figure~\ref{fig:metrics-grid} summarizes the results across multiple metrics.

A key observation is that uniform allocation achieves lower global kernel reconstruction error, as measured by $\mathrm{RMSE}(K)$, which is expected since it distributes measurements evenly across all entries. However, this comes at the cost of poor accuracy in the most relevant parts of the kernel matrix. In particular, the SV-block RMSE shows that the adaptive strategy reconstructs interactions among true support vectors significantly more accurately. This improvement appears already after the first adaptive refinement round.

This behavior is consistent with the theoretical analysis in Sec.~\ref{sec:problem_formulation}, which predicts that only a subset of kernel entries contributes significantly to the SVM solution. By concentrating measurements on these entries, the adaptive scheme improves task-relevant accuracy rather than global kernel fidelity.

The advantage of adaptive allocation is further reflected in support-vector recovery. Both the Jaccard similarity and weighted Jaccard index demonstrate that the adaptive method more reliably identifies the true active set. The gap is particularly pronounced for the weighted Jaccard metric, indicating that adaptive sampling not only recovers the correct support vectors but also better captures their relative importance in the dual representation.

These improvements propagate directly to classifier-level performance. The adaptive scheme achieves substantially smaller errors in both relative margin and decision-function RMSE, indicating more accurate recovery of the separating hyperplane and decision values. Notably, these gains are immediate: adaptive sampling already outperforms uniform allocation after the first refinement round, and the advantage increases monotonically with subsequent rounds. This rapid improvement reflects the ability of the adaptive strategy to quickly focus measurement effort on the most informative kernel entries.

\subsection{Adaptive convergence dynamics.}
To better understand the behavior of the adaptive process, we examine its evolution over a larger number of rounds. Figure~\ref{fig:saturation_adaptive} shows the decision-function RMSE and dual coefficient stability across up to fifty adaptive rounds.

The decision-function RMSE decreases monotonically, with a characteristic two-phase behavior. In the initial phase, the method exhibits rapid improvement, with a sharp reduction in error during the first several rounds. This is followed by a plateau phase, in which additional measurements yield diminishing returns. This saturation behavior indicates that most of the performance gain is achieved early in the adaptive process.

A closely matching trend is observed in the dual coefficient stability measure. Large changes occur during the initial improvement phase, followed by stabilization as the RMSE curve flattens. Importantly, while RMSE requires access to the true kernel and is therefore not available during execution, the dual-stability metric is fully observable. Its convergence thus provides a practical and theoretically grounded stopping signal.

Taken together, these results demonstrate that adaptive measurement rapidly converges toward the true-kernel solution, and that most of the benefit can be obtained using only a small number of refinement rounds. This observation motivates the early-stopping strategy analyzed in the following subsection.

\begin{figure*}[t]
    \centering
    \includegraphics[width=0.7\linewidth]{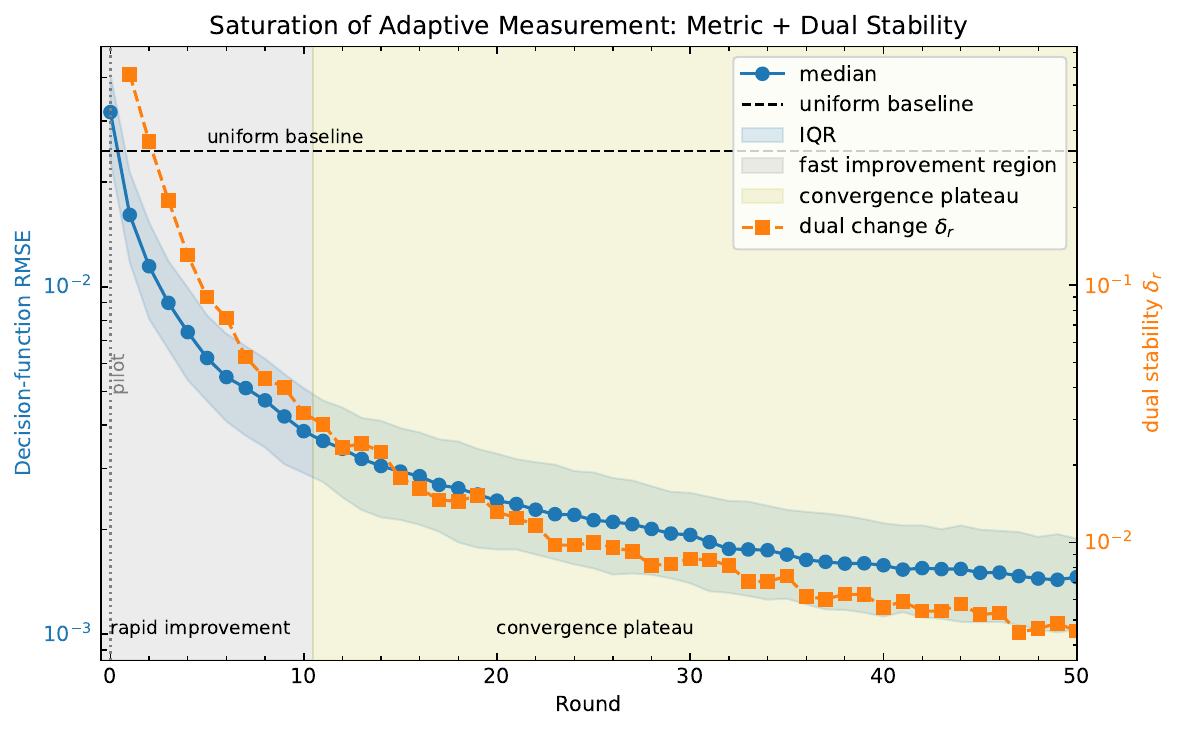}
    \caption{Saturation behavior of the adaptive measurement scheme over $1000$
random \texttt{make\_blobs} datasets of $50$ points. 
The blue curve shows the median decision–function RMSE across adaptive
rounds, with an interquartile ribbon. A dashed horizontal line marks the
performance of the uniform measurement baseline. The shaded region on
the left highlights the phase of rapid improvement (pilot and early
rounds), while the region on the right indicates the convergence
plateau. The orange curve (right $y$–axis, log scale) displays the
median dual–stability measure $\delta_r$, showing that the SVM solution
stabilizes as the adaptive kernel estimates converge.}
    \label{fig:saturation_adaptive}
\end{figure*}

\subsection{Early-Stopping Experiments}\label{sec:early_stop}

The early-stopping experiments evaluate whether the adaptive measurement procedure can terminate before exhausting the full budget while still achieving reliable model quality. The stopping rule is based on the dual coefficient stability introduced in Eq.~\eqref{eq:stoping_criterion}, which measures the relative change of the signed dual vector across consecutive rounds.

To assess the effect of the stopping threshold $\varepsilon$, we sweep a wide range of values and evaluate both accuracy and measurement cost. The primary comparison metric is the relative improvement in decision-function RMSE,
$\Delta_{\mathrm{RMSE}}$, between adaptive and uniform schemes. Positive values indicate an advantage for adaptive allocation, while negative values indicate that uniform sampling performs better.

The results, shown in Fig.~\ref{fig:stopping}, reveal a clear trade-off controlled by $\varepsilon$. For large thresholds, the algorithm terminates early, using only a very small fraction of the measurement budget. In this regime, the adaptive scheme may underperform the uniform baseline due to insufficient refinement of the kernel. As the threshold decreases, the algorithm is allowed to run for more rounds, improving accuracy and eventually surpassing uniform allocation.

A distinct transition occurs at a critical threshold $\varepsilon^* \approx 0.75$, where the sign of $\Delta_{\mathrm{RMSE}}$ changes, indicating a regime in which adaptive allocation consistently outperforms uniform sampling. This transition is confirmed by the success-rate curve, which crosses approximately $50\%$ at the same $\varepsilon^*$ value, marking the boundary between the two regimes.

Beyond classification accuracy, Fig.~\ref{fig:stopping} also quantifies measurement efficiency. At the critical threshold, $\varepsilon^*$, the adaptive scheme uses only about $20\%$ of the full measurement budget and converges in a median of six rounds. This demonstrates a substantial reduction in measurement cost while maintaining competitive or superior performance. For smaller thresholds, the adaptive method achieves further improvements in accuracy while still using significantly fewer measurements than the uniform baseline.

Overall, these results show that dual coefficient stability provides an effective and practically implementable stopping criterion. It enables the adaptive scheme to identify the onset of the convergence plateau observed in Fig.~\ref{fig:saturation_adaptive}, allowing early termination with minimal loss in accuracy. This yields a favorable trade-off between measurement cost and model quality, which is critical in resource-constrained settings.

\begin{figure*}[t]
    \centering
    \includegraphics[width=0.7\linewidth]{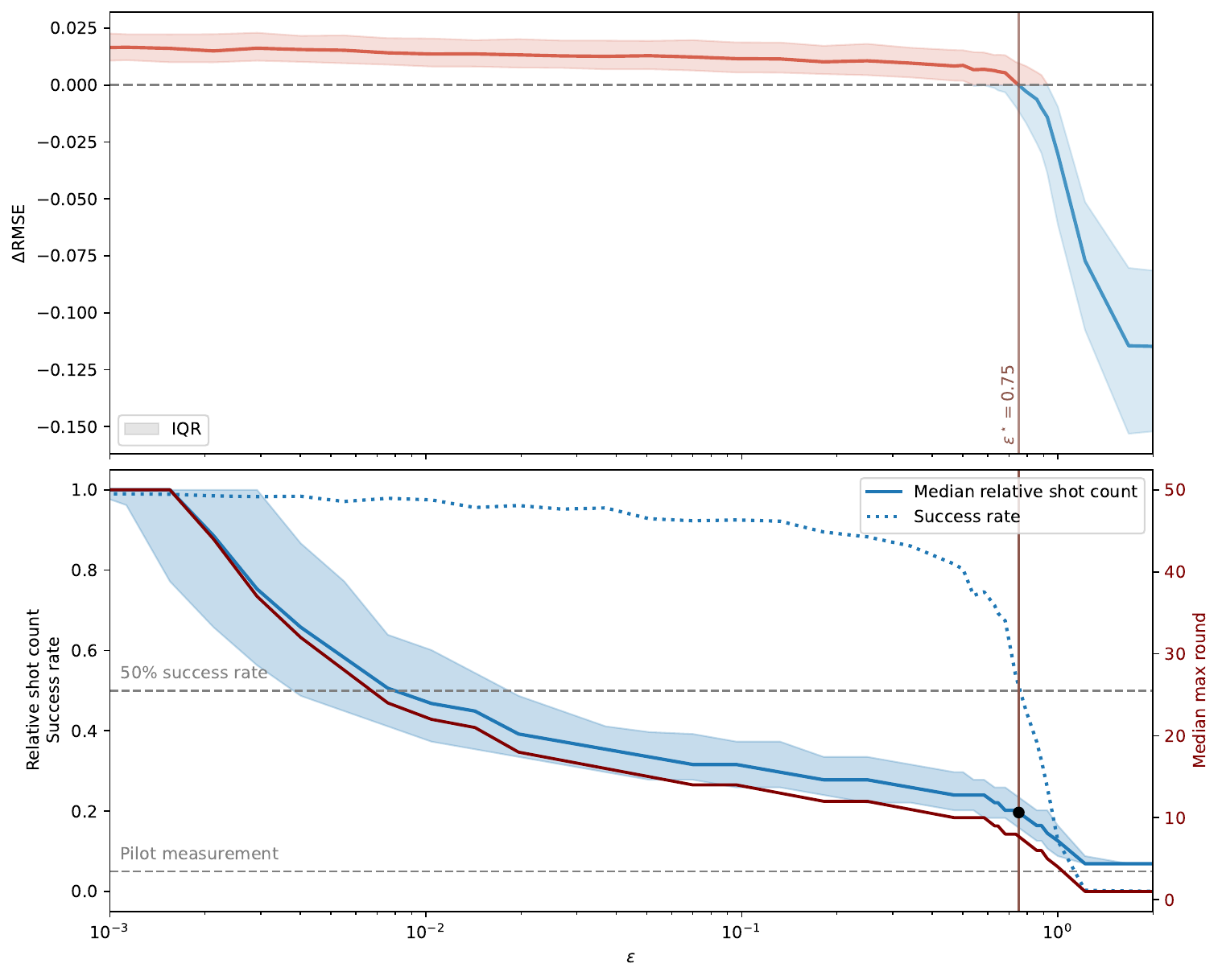}
    \caption{Early stopping via dual–stability.
The adaptive algorithm halts once the dual–stability change $\delta_\ast$ drops 
below a threshold $\epsilon$. \emph{Top:} Difference in decision–function RMSE 
between adaptive and uniform schemes (positive values indicate better adaptive 
performance). \emph{Bottom:} Median relative measurement count, success rate, and median 
number of rounds (right $y$-axis) as functions of $\epsilon$, each computed over 
1000 trials. The vertical line marks the critical $\epsilon$ separating the regimes 
in which adaptive or uniform sampling dominates.}
    \label{fig:stopping}
\end{figure*}

\begin{figure}[t!]
    \centering
    \includegraphics[width=\linewidth]{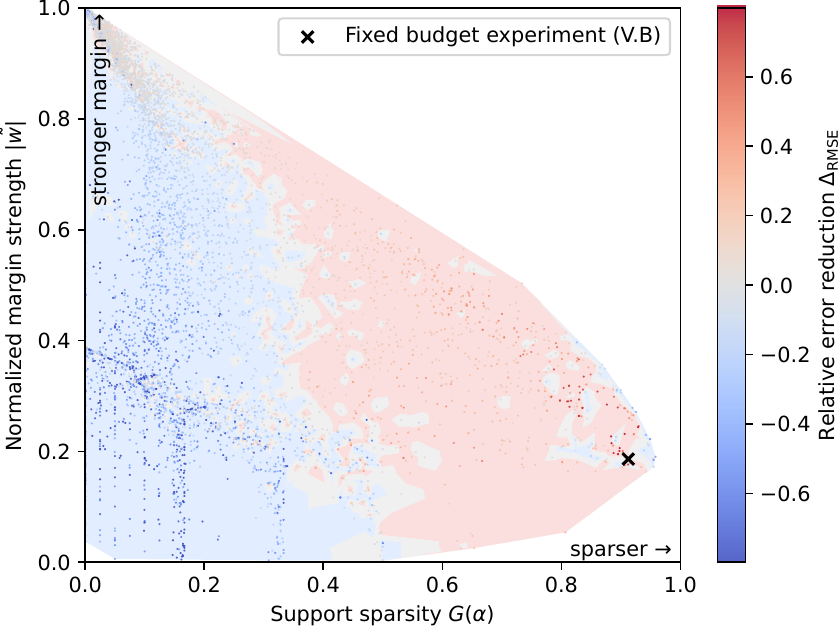}
\caption{
Relative decision-function error reduction $\Delta_{\mathrm{RMSE}}$ for adaptive vs uniform allocation across problem regimes. Each point corresponds to an average over 100 dataset realizations generated with varying structure parameters. Positive values indicate advantage of adaptive allocation, while negative values indicate that uniform sampling perform. The black cross indicates the location of the reference dataset used in previous experiments analysed in Sec.\ref{sec:fixed_budget}.}
    \label{fig:parameter_map}
\end{figure}

\begin{figure*}[t!]
    \centering
    \includegraphics[width=1\linewidth]{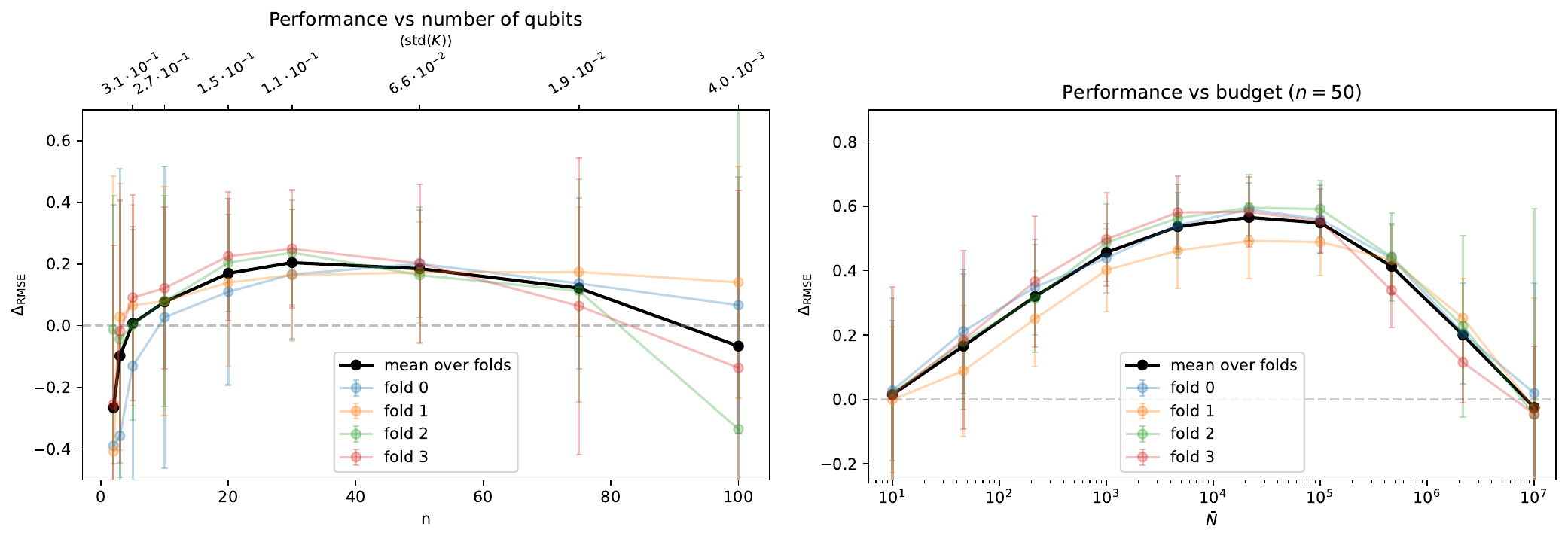}
    \caption{Adaptive vs uniform measurement allocation on quantum kernels derived from the Indian Pines dataset using block-encoding feature maps. 
\textbf{Left:} Relative decision-function error reduction $\Delta_{\mathrm{RMSE}}$ as a function of the number of qubits. The top axis shows the mean standard deviation of the independent kernel entries (averaged over folds), serving as a proxy for kernel concentration issue. Three regimes are visible: (i) a low-structure regime at small qubit numbers where uniform sampling performs slightly better, (ii) an intermediate regime with increased kernel variability where adaptive allocation achieves substantial gains, and (iii) a high-qubit regime where kernel concentration reduces effective signal and both methods exhibit comparable performance.
\textbf{Right:} Relative error reduction as a function of the average number of measurements per kernel entry, $\bar{N}$. For small budgets, both methods are dominated by measurement noise. At intermediate budgets, uniform allocation benefits from its robustness and broad coverage, while at large budgets both approaches converge as the kernel is accurately reconstructed.
Results are averaged over four folds of the dataset. Positive values of $\Delta_{\mathrm{RMSE}}$ indicate an advantage of adaptive allocation, while negative values correspond to superior performance of uniform sampling.}
    \label{fig:quantum_kernel}
\end{figure*}
\subsection{Regimes of dominance: uniform vs adaptive allocation}

The theoretical analysis in Sec.~\ref{sec:problem_formulation} shows that the advantage of the optimal (oracle) allocation is governed by the heterogeneity of the weights $w_{ij}$. In practice, however, these weights must be estimated from noisy kernel measurements, which introduces additional uncertainty. This raises a key question: \emph{in which regimes does adaptive allocation actually outperform uniform sampling?}

To answer this, we perform a systematic exploration of data distributions that control the structure of the induced SVM solution. Synthetic datasets are generated from parametric Gaussian mixtures with tunable properties such as class separation, noise level, anisotropy, and label noise. These parameters directly affect the geometry of the decision boundary and, crucially, the sparsity of the dual coefficients.

To quantify the performance difference between the two strategies, we use the relative decision-function RMSE (defined in Eq.\eqref{eq:relative_RMSE}) where positive values indicate that adaptive allocation improves over uniform sampling.

The results are summarized in Fig.~\ref{fig:parameter_map}.
The horizontal axis measures the sparsity of the SVM solution via the Gini coefficient $G(\alpha)$, which quantifies how unevenly the dual coefficients are distributed: values close to zero correspond to nearly uniform coefficients, while larger values indicate that only a small subset of points (support vectors) carry significant weight. The vertical axis captures the normalized margin strength, reflecting class separation relative to data spread.
Together, these quantities characterize the structural properties determining kernel heterogeneity.

Two distinct regimes emerge:

\begin{itemize}
    \item \textbf{Low-structure regime (small $G(\alpha)$).}  
    In this regime, the dual coefficients are broadly distributed and the weights $w_{ij}$ are nearly uniform. Consequently, the oracle-optimal allocation offers little advantage over uniform sampling. Moreover, because adaptive allocation relies on noisy estimates of these nearly flat weights, estimation errors can dominate, leading to suboptimal measurement allocation. As predicted by Proposition~2, uniform sampling can therefore match or even outperform adaptive strategies due to its robustness.

    \item \textbf{High-structure regime (large $G(\alpha)$ and strong margin).}  
    Here, the SVM solution is sparse and the weights $w_{ij}$ are highly heterogeneous. A small subset of kernel entries carries most of the influence on the classifier. Adaptive allocation exploits this structure by concentrating measurements on these influential entries, resulting in a substantial reduction in decision-function error relative to uniform sampling.
\end{itemize}

These results provide a direct empirical validation of the theoretical predictions: the effectiveness of adaptive allocation is governed by the heterogeneity of the underlying optimization landscape. In highly structured problems, adaptivity yields significant gains, while in near-homogeneous settings its advantage diminishes or disappears due to estimation noise.

\subsection{Adaptive Allocation for Quantum Kernels}

To assess the behavior of the proposed method in realistic quantum machine learning settings, we evaluate adaptive and uniform allocation strategies on quantum kernels derived from the Indian Pines hyperspectral dataset \cite{PURR1947} in the study on the use of large-scale quantum kernels for hyperspectral data classification \cite{large_scale}. The kernels are generated using quantum feature maps, with system sizes ranging from $2$ to $100$ qubits, providing a controlled setting to study the interplay between kernel structure, measurement noise, and resource constraints.

Figure~\ref{fig:quantum_kernel} summarizes the results. On the left, we report the relative decision-function error reduction $\Delta_{\mathrm{RMSE}}$ as a function of the number of qubits, together with the mean standard deviation of kernel entries. On the right, we show the same metric as a function of the average measurement budget per kernel entry, $\bar{N}$.

The left panel reveals a clear three-regime behavior as the number of qubits increases. In the low-qubit regime, uniform allocation performs comparably to, and in some cases slightly better than, the adaptive strategy. Although the empirical variance of individual kernel entries is relatively high in this regime, this variability does not induce meaningful structure in the learning problem. The resulting SVM solutions are diffuse, with broadly distributed dual coefficients and weakly differentiated support vectors, leading to an effectively uniform sensitivity profile across kernel entries. As a result, targeted allocation provides limited advantage over uniform sampling.

In the intermediate regime, corresponding to moderate qubit numbers, a pronounced transition occurs. The kernel begins to induce a more structured decision geometry, and the SVM solution becomes increasingly localized, with a clear separation of influential support vectors and a heterogeneous distribution of effective weights $w_{ij}$. In this setting, adaptive allocation significantly outperforms uniform sampling by concentrating measurements on the most informative kernel entries, leading to substantially improved decision-function accuracy. This regime aligns closely with the conditions identified in Sec.~\ref{sec:problem_formulation}, where heterogeneity in the induced importance weights enables strong gains from structured allocation.

For larger system sizes, the kernel enters a regime dominated by concentration effects \cite{thanasilpExponentialConcentrationQuantum2024}, in which off-diagonal entries shrink and the effective signal available for classification diminishes. In this setting, both methods experience a degradation in performance under a fixed measurement budget, and their performance becomes comparable. However, adaptive allocation remains valuable in practice, as it enables more efficient use of limited resources and allows exploration of deeper concentration regimes before performance collapses. While it does not eliminate the impact of kernel concentration, it extends the range of system sizes for which meaningful learning remains feasible.

The right panel examines the dependence of performance on the measurement budget for a fixed system size ($50$ qubits). For very small budgets, both methods perform similarly, as the available measurements are insufficient to reliably estimate the kernel structure. As the budget increases, a clear performance gap emerges: adaptive allocation significantly outperforms uniform sampling by focusing resources on the most informative kernel entries, leading to substantially lower decision-function error. In the large-budget regime, the gap closes as both methods approach the ideal classifier, with sufficient measurements to accurately reconstruct the kernel regardless of the allocation strategy.

Taken together, these results demonstrate that the effectiveness of adaptive measurement depends on both the structure of the kernel and the available measurement budget. While its benefits are limited in regimes where the learning problem lacks structure or where the signal is fundamentally degraded by concentration, adaptive allocation provides substantial gains in the practically relevant intermediate regime and remains effective over a broad range of budgets. In particular, it enables more efficient use of measurement resources and extends the operating regime of kernel-based learning beyond what is achievable with uniform sampling alone.

\subsection{Evaluation on Quantum Hardware}\label{sec:ibm_hardware_validation} 
To assess whether the proposed adaptive allocation strategy remains effective beyond simulated measurement models, we performed a hardware validation using \texttt{IBM Quantum Runtime} on the \texttt{IBM Marrakesh} backend. 
The experiment uses the synthetic binary classification dataset, consisting of $n=8$ training samples, for which the reference true-kernel SVM has two support vectors. 
Kernel entries are estimated using a direct shot-based fidelity-kernel procedure based on the compute--uncompute construction \cite{havlivcek2019supervised}. 
The quantum feature map consists of two repetitions of an angle-embedding layer followed by CNOT entangling gates, producing a nontrivial kernel while keeping the circuits sufficiently small for repeated hardware execution. 

Each experiment uses a fixed total measurement budget corresponding to $40$ shots per independent off-diagonal kernel entry (in total $1120$ shots per experiment). 
The adaptive procedure starts from a pilot allocation of $8$ shots per entry and then performs three adaptive rounds using multinomial allocation with $\lambda=0.5$. The SVM regularization parameter is fixed to $C=10$. We repeat the full adaptive-versus-uniform comparison over $15$ independent hardware runs.


Table~\ref{tab:ibm_hardware_summary} summarizes the final adaptive and uniform results. The hardware results reveal a clear separation between task-agnostic kernel reconstruction and SVM-relevant estimation quality. 
Uniform allocation achieves lower full-kernel reconstruction error, with full-kernel RMSE $0.0556 \pm 0.0086$, compared with $0.0771 \pm 0.0162$ for adaptive allocation. 
This behavior is expected: uniform allocation spreads measurements evenly across all kernel entries and is therefore well matched to global Gram matrix reconstruction. 
The dominance of uniform allocation in full-kernel reconstruction is also reflected in the paired comparisons. Uniform achieves lower full-kernel RMSE in 14 of the 15 hardware runs, yielding a statistically significant paired sign-test result.

In contrast, adaptive allocation substantially improves the quantities that determine the downstream SVM classifier. 
The RMSE restricted to the support-vector block decreases from $0.0314 \pm 0.0236$ under uniform allocation to $0.0112 \pm 0.0064$ under adaptive allocation, corresponding to an approximate $64\%$ reduction. 
Weighted support-vector recovery improves from $0.7919 \pm 0.1082$ to $0.9129 \pm 0.0539$, with adaptive allocation outperforming uniform allocation in $12$ of $15$ runs, with the paired sign test confirming that the observed improvement is unlikely to arise from random run-to-run variability alone. The classifier-level quantities show the same trend: the margin error decreases from $0.7632 \pm 0.3689$ to $0.2664 \pm 0.1610$, while the decision-function RMSE decreases from $0.4606 \pm 0.2130$ to $0.1659 \pm 0.0974$.
Adaptive allocation wins on both of these classifier-level metrics in $14$ of $15$ hardware runs. 

Figure~\ref{fig:ibm_hardware_stage_summary} provides a more detailed view of this trade-off. 
The top-left panel shows that adaptive allocation gradually sacrifices global kernel accuracy relative to the uniform baseline. 
In contrast, the remaining panels demonstrate consistent improvements in support-vector reconstruction, weighted support-vector agreement, and decision-function estimation throughout the adaptive rounds. 
These trends closely mirror the behavior observed in the synthetic experiments, indicating that the proposed strategy remains effective under realistic hardware noise.


We further evaluate the theoretical margin-variance objective using the reference dual coefficients, the reference kernel, and the actual final allocations produced in each hardware run. 
As shown in Table~\ref{tab:ibm_margin_variance}, the predicted margin variance under adaptive allocation is only $7.8\%$ of the variance induced by uniform allocation on average, with run-wise ratios ranging from $5.5\%$ to $10.8\%$. 
The corresponding oracle allocation achieves a ratio of $3.6\%$. Thus, even though the adaptive policy relies only on noisy sequential kernel estimates, it moves substantially toward the theoretically optimal allocation and explains the improved margin and decision-function estimates observed on hardware. 
\begin{table*}[t]
\centering
\caption{
IBM hardware results over 15 independent paired runs. Values are reported as mean \(\pm\) standard deviation. The relative adaptive effect is positive when adaptive allocation improves over uniform allocation; for error metrics this corresponds to relative error reduction, whereas for weighted Jaccard it corresponds to relative score increase. The final column reports two-sided paired sign-test \(p\)-values computed from the run-wise win counts, with ties excluded from the test.
}
\label{tab:ibm_hardware_summary}
\resizebox{\textwidth}{!}{%
\begin{tabular}{lccccc}
\toprule
Metric & Adaptive & Uniform & Rel. adaptive effect & Adaptive wins & Sign-test \(p\) \\
\midrule
Full-kernel RMSE & $0.0771 \pm 0.0162$ & $0.0556 \pm 0.0086$ & $-38.7\%$ & $1/15$ & $9.8e-04$ \\
SV-block RMSE & $0.0112 \pm 0.0064$ & $0.0314 \pm 0.0236$ & $64.4\%$ & $12/15$ & $3.5e-02$ \\
Weighted Jaccard & $0.9129 \pm 0.0539$ & $0.7919 \pm 0.1082$ & $15.3\%$ & $13/15$ & $7.4e-03$ \\
Margin error & $0.2664 \pm 0.1610$ & $0.7632 \pm 0.3689$ & $65.1\%$ & $14/15$ & $9.8e-04$ \\
Decision RMSE & $0.1659 \pm 0.0974$ & $0.4606 \pm 0.2130$ & $64.0\%$ & $14/15$ & $9.8e-04$ \\
\bottomrule
\end{tabular}%
}
\end{table*} 
\begin{table}[t]
\centering
\caption{
Predicted margin-variance ratios on IBM hardware, computed using the reference dual coefficients, the reference kernel, and the final allocations from each run. Ratios are reported relative to uniform allocation.
}
\label{tab:ibm_margin_variance}
\begin{tabular}{lcccc}
\toprule
Ratio & Mean & Std. & Min. & Max. \\
\midrule
Adaptive / uniform & $7.8\%$ & $1.3\%$ & $5.5\%$ & $10.8\%$ \\
Oracle / uniform & $3.6\%$ & $0.0\%$ & $3.6\%$ & $3.6\%$ \\
\bottomrule
\end{tabular}
\end{table}

\begin{figure*}[t] 
\centering 
\begin{subfigure}[t]{0.48\textwidth} 
\centering \includegraphics[width=\textwidth]{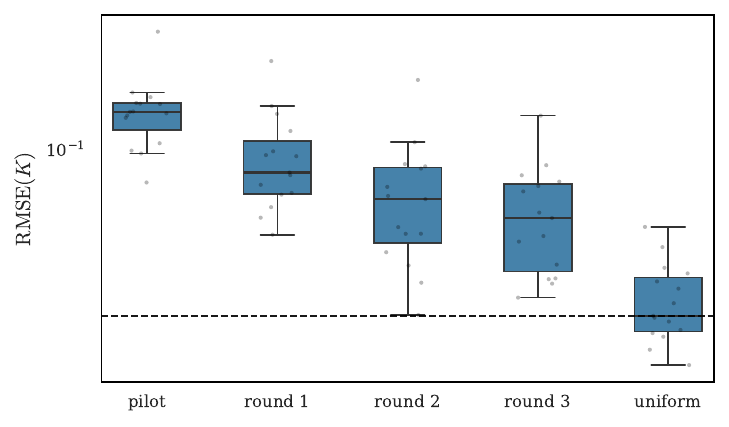} 
\caption{Global kernel error \(\mathrm{RMSE}(K)\).} \label{fig:ibm_rmse_k} 
\end{subfigure} 
\hfill 
\begin{subfigure}[t]{0.48\textwidth} 
\centering \includegraphics[width=\textwidth]{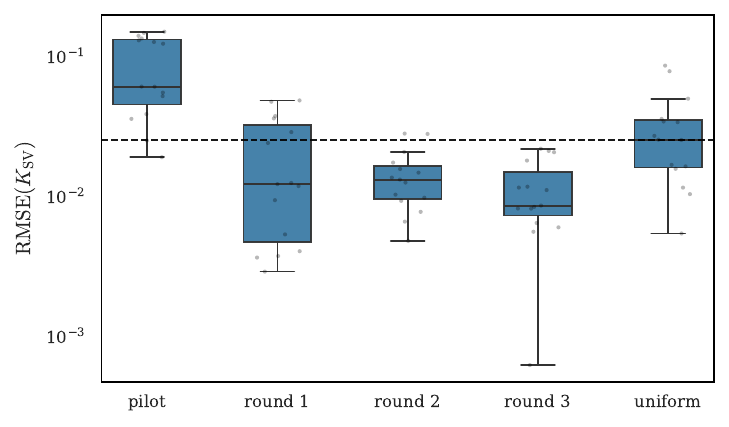} 
\caption{SV-block error \(\mathrm{RMSE}(K_{\mathrm{SV}})\).} \label{fig:ibm_rmse_k_sv} 
\end{subfigure} 
\vspace{0.6em}
\begin{subfigure}[t]{0.48\textwidth} 
\centering \includegraphics[width=\textwidth]{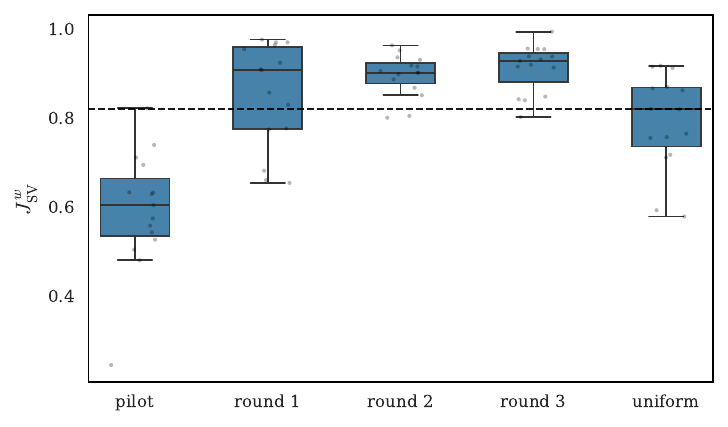} 
\caption{Dual-weighted support-vector agreement \(J_{\mathrm{SV}}^{w}\).} \label{fig:ibm_jaccard_weighted} 
\end{subfigure} 
\hfill 
\begin{subfigure}[t]{0.48\textwidth} 
\centering \includegraphics[width=\textwidth]{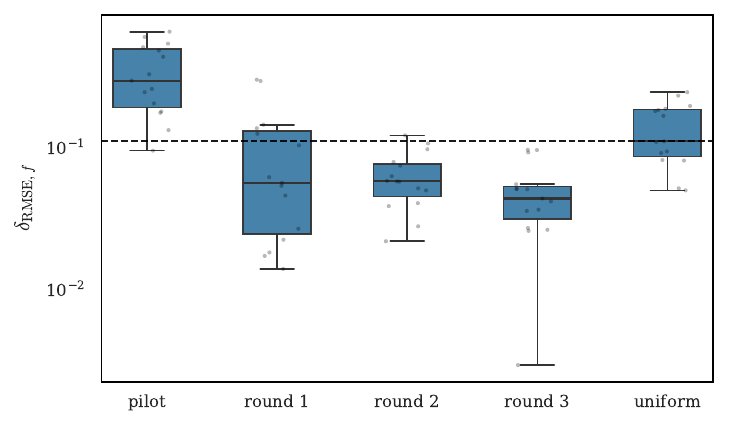} 
\caption{Normalized decision-function error \(\delta_{\mathrm{RMSE},f}\).} \label{fig:ibm_func_rmse_normalized} 
\end{subfigure}
\caption{ IBM Quantum hardware validation across the pilot stage, adaptive rounds, and uniform baseline. Horizontal dashed lines indicate the median performance of the uniform baseline. Uniform allocation achieves the best global kernel reconstruction, whereas adaptive allocation progressively improves the SVM-relevant quantities: support-vector-block reconstruction, weighted support-vector recovery, and decision-function estimation. } \label{fig:ibm_hardware_stage_summary} \end{figure*}

\section{Software Release and Reproducibility}\label{sec:shotwise}
To simplify reproducibility and adoption, we release an open-source Python package, \texttt{ShotWise}, implementing the adaptive measurement-allocation framework developed in this work. The package provides the SupportShot algorithm introduced in this paper, supports synthetic measurement backends, \texttt{Qiskit} simulators, and IBM Quantum hardware, and includes utilities for experiment management and statistical analysis. The implementation follows a backend-agnostic design, enabling future extensions to additional measurement-based kernel-learning settings beyond quantum kernels. The repository and documentation are available at: \url{https://github.com/ESA-PhiLab/shotwise}

\section{Conclusion}\label{sec:conclusion}

In this work, we studied the problem of learning kernelized Support Vector Machines under measurement constraints, where the kernel matrix must be estimated from noisy observations and the learner controls accuracy through the allocation of a limited measurement budget. We showed that uniform allocation, while statistically balanced, fails to exploit the highly non-uniform dependence of the SVM solution on the Gram matrix.

To address this, we introduced an adaptive measurement-allocation strategy that is explicitly guided by the structure of the SVM problem. Our approach combines geometric sensitivity, capturing the influence of individual kernel entries on the margin, with active set instability, quantifying the probability of discrete changes in support-vector membership. These complementary signals enable a task-aware allocation scheme that concentrates measurement effort on kernel matrix entries most relevant to the learned classifier.

On the theoretical side, we demonstrated that the advantage of adaptive allocation is governed by the heterogeneity of the induced kernel importance structure. This leads to a clear regime-dependent picture: in strongly structured problems, adaptive allocation provides substantial gains, while in near-homogeneous settings its benefits diminish due to estimation noise. On the algorithmic side, we developed a multi-round adaptive procedure with an early-stopping criterion based on dual coefficient stability, allowing the method to terminate once the classifier has effectively converged. Experimentally, we showed that this approach improves support-vector recovery, margin estimation, and decision-function accuracy under fixed budgets, while often requiring only a fraction of the measurement budget.

Beyond synthetic evaluations, experiments on quantum kernels derived from real-world data revealed a consistent interplay between kernel structure, measurement budget, and system size. In particular, adaptive allocation is most effective in intermediate regimes where the kernel induces a structured and heterogeneous SVM solution, leading to substantial improvements in decision-function accuracy. While performance degrades in regimes dominated by kernel concentration or severe measurement noise, adaptive allocation continues to make more efficient use of the available measurement budget and extends the range of conditions under which meaningful learning remains feasible. Importantly, these trends were observed not only in simulation but also in experiments conducted on \texttt{IBM Quantum} hardware, where the proposed adaptive strategy remained effective despite device noise. This connection highlights the practical relevance of the proposed method for quantum machine learning, where measurement efficiency, hardware noise, and kernel degradation are central challenges.

Several directions for future work remain open. On the theoretical side, a deeper understanding of support-vector transitions and active-set instability could help reduce the gap between local sensitivity analysis and global changes in classifier structure. More broadly, the present work suggests a general perspective in which finite measurement resources are allocated according to their downstream impact on learning. Extending this principle beyond kernelized SVMs to other kernel methods and statistical learning algorithms is therefore a natural next step. Finally, integrating adaptive measurement allocation with complementary scalability techniques, including low-rank kernel approximations and matrix-compression methods, may enable efficient learning from large noisy kernel matrices under realistic resource constraints.

\section*{Acknowledgements} 

The author would like to thank Lorenzo Papa for his support in obtaining access to IBM Quantum devices and Jian Xu for valuable discussions related to this work. 

This work was conducted within the European Space Agency (ESA). ESA classification: UNCLASSIFIED -- Releasable to the Public.
\bibliographystyle{IEEEtran}
\bibliography{ref_all}

\begin{appendices}

\section{Variance of Kernel Estimates under Hardware and Sampling Noise}\label{App:variance}

In this appendix we derive the expectation and variance of the empirical kernel estimator in the presence of both sampling noise and stochastic hardware-induced fluctuations.

\subsection{Model Definition}

We consider a fixed kernel entry $(i,j)$ and omit indices for clarity. Let
\begin{equation}
    K \in [0,1]
\end{equation}
denote the true kernel value.

We assume that each execution of the kernel evaluation procedure produces an \emph{effective} kernel realization
\begin{equation}
    \tilde{K}^{(t)} = K + \varepsilon^{(t)},
\end{equation}
where the noise process $\{\varepsilon^{(t)}\}_{t=1}^N$ satisfies
\begin{equation}
    \mathbb{E}[\varepsilon^{(t)}] = 0,
\end{equation}
and is not necessarily independent across different measurements. In particular, we allow
\begin{equation}
    \mathrm{Cov}(\varepsilon^{(t)}, \varepsilon^{(s)}) \neq 0 \quad \text{for } t \neq s.
\end{equation}

Given a realization $\tilde{K}^{(t)}$, the observed measurement is
\begin{equation}
    X^{(t)} \sim \mathrm{Bernoulli}(\tilde{K}^{(t)}),
\end{equation}
and we define the empirical estimator
\begin{equation}
    \widehat{K} = \frac{1}{N} \sum_{t=1}^N X^{(t)}.
\end{equation}

\subsection{Expectation of the Estimator}

We compute the expectation using the law of total expectation:
\begin{align}
    \mathbb{E}[X^{(t)}]
    &= \mathbb{E}\big[\mathbb{E}[X^{(t)} \mid \tilde{K}^{(t)}]\big] \\
    &= \mathbb{E}\big[\tilde{K}^{(t)}\big] \\
    &= K.
\end{align}
Therefore,
\begin{equation}
    \mathbb{E}[\widehat{K}] = \frac{1}{N} \sum_{t=1}^{N} \mathbb{E}[X^{(t)}] = K.
\end{equation}

Thus, the estimator $\widehat{K}$ is unbiased for the true kernel value.

\subsection{Variance of the Estimator}

We compute the variance of $\widehat{K}$:
\begin{equation}
    \mathrm{Var}(\widehat{K})
    =
    \mathrm{Var}\left(\frac{1}{N} \sum_{t=1}^{N} X^{(t)}\right).
\end{equation}

Using the standard variance decomposition for sums,
\begin{equation}
    \mathrm{Var}(\widehat{K})
    =
    \frac{1}{N^2}
    \left(
        \sum_{t=1}^{N} \mathrm{Var}(X^{(t)})
        +
        \sum_{t \neq s} \mathrm{Cov}(X^{(t)}, X^{(s)})
    \right).
\end{equation}

\subsubsection{Variance of Individual Measurements}

We apply the law of total variance:
\begin{align}
    \mathrm{Var}(X^{(t)})
    &=
    \mathbb{E}[\mathrm{Var}(X^{(t)} \mid \tilde{K}^{(t)})]
    +
    \mathrm{Var}(\mathbb{E}[X^{(t)} \mid \tilde{K}^{(t)}]).
\end{align}

Since
\begin{align}
    \mathrm{Var}(X^{(t)} \mid \tilde{K}^{(t)}) &= \tilde{K}^{(t)}(1 - \tilde{K}^{(t)}), \\
    \mathbb{E}[X^{(t)} \mid \tilde{K}^{(t)}] &= \tilde{K}^{(t)},
\end{align}
we obtain
\begin{equation}
    \mathrm{Var}(X^{(t)})
    =
    \mathbb{E}\big[\tilde{K}^{(t)}(1 - \tilde{K}^{(t)})\big]
    +
    \mathrm{Var}(\tilde{K}^{(t)}).
\end{equation}

Using the identity
\begin{equation}
    \mathbb{E}[\tilde{K}(1-\tilde{K})] = K(1-K) - \mathrm{Var}(\tilde{K}),
\end{equation}
we obtain
\begin{equation}
    \mathrm{Var}(X^{(t)}) = K(1-K).
\end{equation}

\subsubsection{Covariance Between Measurements}

We compute the covariance using the law of total covariance:
\begin{equation} 
\begin{split} 
\mathrm{Cov}(X^{(t)}, X^{(s)}) &= \mathbb{E}[\mathrm{Cov}(X^{(t)}, X^{(s)} \mid \tilde{K})] \\ 
&\quad + \mathrm{Cov}(\mathbb{E}[X^{(t)} \mid \tilde{K}], \mathbb{E}[X^{(s)} \mid \tilde{K}]). 
\end{split} 
\end{equation}

Given $\tilde{K}^{(t)}$ and $\tilde{K}^{(s)}$, the measurements are conditionally independent, hence
\begin{equation}
    \mathrm{Cov}(X^{(t)}, X^{(s)} \mid \tilde{K}) = 0.
\end{equation}

Therefore,
\begin{equation}
    \mathrm{Cov}(X^{(t)}, X^{(s)}) =
    \mathrm{Cov}(\tilde{K}^{(t)}, \tilde{K}^{(s)}).
\end{equation}

We define
\begin{equation}
    \sigma^2_{\mathrm{phys}} :=
    \mathrm{Cov}(\tilde{K}^{(t)}, \tilde{K}^{(s)}),
    \quad t \neq s,
\end{equation}
which captures the contribution of correlated hardware-induced fluctuations.

\subsubsection{Final Expression}

We now combine the results:
\begin{align}
    \mathrm{Var}(\widehat{K})
    &=
    \frac{1}{N^2}
    \left(
        N K(1-K)
        +
        N(N-1)\sigma^2_{\mathrm{phys}}
    \right) \\
    &=
    \frac{K(1-K)}{N}
    +
    \left(1 - \frac{1}{N}\right)\sigma^2_{\mathrm{phys}}.
\end{align}

\subsection{Discussion}

The variance of the kernel estimator decomposes into two contributions:
\begin{itemize}
    \item A \emph{sampling (measurement) noise} term, $\frac{K(1-K)}{N}$, which decreases with the number of measurements.
    \item An \emph{irreducible hardware-induced term}, 
    $\left(1 - \frac{1}{N}\right)\sigma^2_{\mathrm{phys}}$, which arises from correlations between kernel realizations across measurements.
\end{itemize}

In particular,
\begin{equation}
    \lim_{N \to \infty} \mathrm{Var}(\widehat{K}) = \sigma^2_{\mathrm{phys}},
\end{equation}
showing that correlated fluctuations of the effective kernel impose a fundamental limit on the precision of kernel estimation that cannot be overcome by increasing the number of measurements.

\section{Margin Sensitivity via the Envelope Theorem}\label{app:envelope}
In Sec.~\ref{sec:theory_sensitivity} we use the fact that, when a perturbation of a kernel entry $K_{ij}$ does not alter the active set of support vectors, the derivative of the squared margin with respect to $K_{ij}$ can be computed without differentiating through the optimal dual variables $\alpha^*(K)$.
This appendix provides a concise justification based on the envelope theorem.

Recall the SVM dual objective:
\begin{equation}
    \max_{\alpha\in\mathcal{A}}\ g(\alpha,K),
\end{equation}
with
\begin{equation}
    g(\alpha,K)=
\sum_{i=1}^n \alpha_i
-
\frac12 \sum_{i,j=1}^n \alpha_i\alpha_j y_i y_j K_{ij},
\end{equation}
and feasible set
$$\mathcal{A}
=
\left\{
0\le\alpha_i\le C,\;
\sum_i \alpha_i y_i = 0
\right\}.$$
For fixed kernel $K$, let $\alpha^*(K)$ denote the unique optimal solution (on the active set manifold).
Define the optimal dual value:
\begin{equation}
V(K)=g(\alpha^*(K),K).    
\end{equation}
The (Milgrom–Segal) envelope theorem states that,
if $V(K) = \max_{\alpha \in \mathcal{A}} g(\alpha,K)$ and the maximizer $\alpha^*(K)$ varies smoothly in a neighborhood where the active set of constraints does not change,
then
\begin{equation}
    \frac{\partial V(K)}{\partial K_{ij}}
=
\frac{\partial g(\alpha, K)}{\partial K_{ij}}
\bigg|_{\alpha = \alpha^*(K)}.
\end{equation}
Intuitively, when $\alpha^*$ is the optimizer for the current $K$, the directional derivative of $g$ in any feasible tangent direction is already zero (first-order KKT stationarity). Therefore the only ``direct" sensitivity at optimum comes from the explicit dependence of the objective on $K$, not from the optimizer’s movement.

\section{Optimal Measurement Allocation}\label{app:optimization}

\subsection{General Allocation Principle}

Consider the optimization problem
\begin{equation}
\min_{\{N_{ij}\}} \sum_{i<j} \frac{w^2_{ij}}{N_{ij}}
\quad
\text{s.t.}
\quad
\sum_{i<j} N_{ij} = N_{\mathrm{tot}}, \quad N_{ij} > 0,
\end{equation}
where $w_{ij} \ge 0$ are arbitrary weights.

The Lagrangian is given by
\begin{equation}
\mathcal{L} =
\sum_{i<j} \frac{w^2_{ij}}{N_{ij}}
+
\lambda \left( \sum_{i<j} N_{ij} - N_{\mathrm{tot}} \right).
\end{equation}

Differentiating with respect to $N_{ij}$ yields
\begin{equation}
-\frac{w_{ij}^2}{N_{ij}^2} + \lambda = 0,
\end{equation}
which implies
\begin{equation}
N_{ij} = \sqrt{\frac{w_{ij}^2}{\lambda}}.
\end{equation}

Imposing the constraint gives
\begin{equation}
\sum_{i<j} \sqrt{\frac{w^2_{ij}}{\lambda}} = N_{\mathrm{tot}},
\end{equation}
hence
\begin{equation}
\sqrt{\lambda} = \frac{\sum_{i<j} \sqrt{w^2_{ij}}}{N_{\mathrm{tot}}}.
\end{equation}

Substituting back, we obtain the general solution
\begin{equation}
\boxed{
N_{ij}^\star
=
N_{\mathrm{tot}}
\frac{w_{ij}}{\sum_{k<l} w_{kl}}.
}
\end{equation}

\subsection{Margin Variance Minimization}

Using first-order sensitivity analysis, the variance of the squared margin can be approximated as
\begin{equation}
V_{\mathrm{margin}}
=
\sum_{i<j}
(\alpha_i \alpha_j)^2
\frac{K_{ij}(1-K_{ij})}{N_{ij}}.
\end{equation}

This corresponds to weights
\begin{equation}
\left(w_{ij}^{\mathrm{margin}}\right)^2
\coloneqq
(\alpha_i \alpha_j)^2 K_{ij}(1-K_{ij}).
\end{equation}

Substituting into the general solution yields
\begin{equation}
\boxed{
N_{ij}^{\mathrm{margin}}
\propto
|\alpha_i \alpha_j|\;\sqrt{K_{ij}(1-K_{ij})}.
}
\end{equation}

\subsection{Decision Function Variance Minimization}

The variance of the decision function is given by
\begin{equation}
\sigma^2_{f,i}
=
\sum_j (\alpha_j y_j)^2 \frac{K_{ij}(1-K_{ij})}{N_{ij}}.
\end{equation}

Summing over $i$ yields
\begin{equation}
V_{\mathrm{dec}} =
\sum_{i<j}
(\alpha_i^2 + \alpha_j^2)
\frac{K_{ij}(1-K_{ij})}{N_{ij}},
\end{equation}
which corresponds to weights
\begin{equation}
\left( w_{ij}^{\mathrm{dec}}\right)^2
\coloneqq
(\alpha_i^2 + \alpha_j^2)\,K_{ij}(1-K_{ij}).
\end{equation}

Applying the general solution gives
\begin{equation}
\boxed{
N_{ij}^{\mathrm{dec}}
\propto
\sqrt{\alpha_i^2 + \alpha_j^2}\;\sqrt{K_{ij}(1-K_{ij})}.
}
\end{equation}

\subsection{Comparison of Allocation Strategies}

Both allocation schemes share the same functional form and differ only in the dependence on the dual variables:
\begin{align}
N_{ij}^{\mathrm{margin}} 
&\propto |\alpha_i \alpha_j|, \\
N_{ij}^{\mathrm{dec}} 
&\propto \sqrt{\alpha_i^2 + \alpha_j^2}.
\end{align}

Using the inequalities
\begin{equation}
|\alpha_i \alpha_j|
\le \frac{\alpha_i^2 + \alpha_j^2}{2},
\end{equation}
it follows that the two allocation strategies are equivalent up to multiplicative constants.

In particular, both concentrate measurement effort on pairs involving data points with large dual coefficients, i.e., on support vectors.

\section{Proposition proofs}\label{app:propositions}
\subsection{Proposition 1}
The proof follows from the obtained variances. The uniform approach gives
\[
V_{\mathrm{unif}} = \frac{n(n-1)}{2N_{\mathrm{tot}}} \sum_{i<j} w^2_{ij},
\]
while the optimal allocation yields
\[
V^\star = \frac{(\sum_{i<j} w_{ij})^2}{N_{\mathrm{tot}}}.
\]
and the Cauchy-Schwarz inequality,
\[
{\biggl (}\sum _{i=1}u_{i}v_{i}{\biggr )}^{2}\leq {\biggl (}\sum _{i=1}u_{i}^{2}{\biggr )}{\biggl (}\sum _{i=1}v_{i}^{2}{\biggr )}.
\]
We take the sum over all the independent matrix elements $\sum_{i<j} \equiv \sum_{i}$, take $u_i = w_{ij}$, and $v_i = 1$. We get
\[
{\biggl (}\sum _{i<j}w_{ij} \cdot 1{\biggr )}^{2}\leq {\biggl (}\sum _{i=1}w^2_{ij}{\biggr )}\underbrace{{\biggl (}\sum _{i<j}1{\biggr )}}_{\frac{n(n-1)}{2}}.
\]
Dividing both sides of the equation by $N_{tot}$, we get
\[
V^* \leq V_{unif},
\]
which finishes the proof.
\subsection{Proposition 2}
The variance objective is given by
\[
V = \sum_{i<j} \frac{w^2_{ij}}{N_{ij}}.
\]

Let $N_{ij}^\star$ denote the oracle-optimal allocation, and consider a perturbed allocation
\[
N_{ij} = N_{ij}^\star + \delta N_{ij},
\]
subject to the constraint
\[
\sum_{i<j} \delta N_{ij} = 0.
\]

We perform a second-order Taylor expansion of the function $f(x) = 1/x$ around $x = N_{ij}^\star$:
\[
\frac{1}{N_{ij}} =
\frac{1}{N_{ij}^\star}
- \frac{\delta N_{ij}}{(N_{ij}^\star)^2}
+ \frac{(\delta N_{ij})^2}{(N_{ij}^\star)^3}
+ \mathcal{O}((\delta N_{ij})^3).
\]

Substituting this expansion into the expression for $V$, we obtain
\[
V \approx
\sum_{i<j} w^2_{ij}
\left(
\frac{1}{N_{ij}^\star}
- \frac{\delta N_{ij}}{(N_{ij}^\star)^2}
+ \frac{(\delta N_{ij})^2}{(N_{ij}^\star)^3}
\right).
\]

The first term recovers the optimal value:
\[
V^\star = \sum_{i<j} \frac{w^2_{ij}}{N_{ij}^\star}.
\]

The first-order term vanishes at the optimum. This follows from the first-order optimality (KKT) conditions for the constrained minimization problem, which ensure that the gradient of $V$ at $N^\star$ is orthogonal to feasible perturbations satisfying $\sum_{i<j} \delta N_{ij} = 0$.

Thus, retaining terms up to second order,
\[
V \approx V^\star + \sum_{i<j} \frac{w^2_{ij}}{(N_{ij}^\star)^3} (\delta N_{ij})^2.
\]

Taking expectation and assuming that $\mathbb{E}[\delta N_{ij}] = 0$, we obtain
\[
\mathbb{E}[V]
=
V^\star
+
\sum_{i<j}
\frac{w^2_{ij}}{(N_{ij}^\star)^3}
\, \mathbb{E}\left[(\delta N_{ij})^2\right]
+
\mathcal{O}(\|\delta N\|^3),
\]
which completes the proof.

\subsection{Proposition 3} \label{app:proof_missing_sensitivity} 
Let \(Z_i\in\{0,1\}\) denote the event that point \(i\) belongs to the perturbation-induced margin-active set. 
Let \(\alpha_i^{new}\) denote the dual coefficient after the perturbation, and define the transition-induced sensitivity associated with entry \((i,j)\) as \[ h_{ij} := \alpha_i^{new}\alpha_j^{new} y_i y_j\, Z_iZ_j . \] 
This quantity represents the possible margin sensitivity that is not captured by the local fixed-active-set approximation but may appear after an active set transition. 
By the box constraints of the soft-margin SVM, \[ 0 \leq \alpha_i^{new} \leq C \qquad \text{for all } i. \] 
Therefore, whenever \(Z_iZ_j=1\), \[ |h_{ij}| = |\alpha_i^{new}\alpha_j^{new} y_i y_j| \leq C^2, \] since \(|y_i y_j|=1\). 
If \(Z_iZ_j=0\), then \(h_{ij}=0\) by definition. 
Hence, in all cases, \[ |h_{ij}| \leq C^2 Z_iZ_j . \] 
Taking expectations gives \[ \mathbb E[|h_{ij}|] \leq C^2 \mathbb E[Z_iZ_j] = C^2 \Pr(Z_i=1,Z_j=1). \] 
The instability probability \(P_i\) is used as a proxy for the probability that point \(i\) becomes margin-active under measurement noise. 
Under the weak-dependence approximation \[ \Pr(Z_i=1,Z_j=1) \lesssim P_iP_j, \] we obtain \[ \mathbb E[|h_{ij}|] \lesssim C^2P_iP_j . \] 
This proves the claim.

\begin{figure*}[t] 
\centering 
\includegraphics[width=0.8  \linewidth]{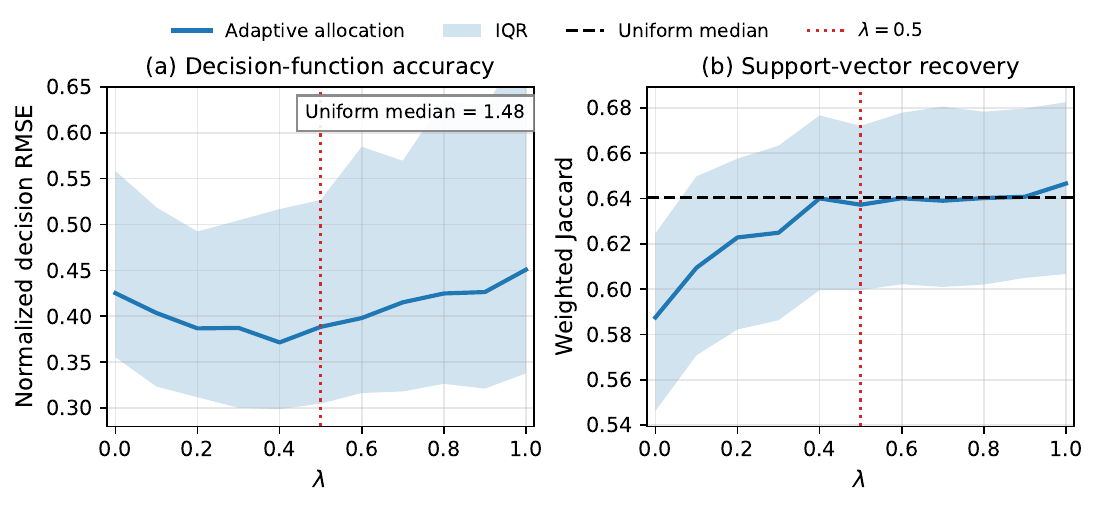} 
\caption{ Ablation study of the allocation mixing parameter $\lambda$ under a fixed measurement budget. The parameter interpolates between margin-sensitivity allocation ($\lambda=0$) and active set instability allocation ($\lambda=1$). Panel (a) reports the normalized decision-function RMSE with respect to the true-kernel SVM, while panel (b) reports the weighted Jaccard similarity between the estimated and reference dual coefficient vectors. Solid lines show the median over 500 independent runs and shaded regions denote the interquartile range. The vertical red line indicates the default choice $\lambda=0.5$. Best decision-function accuracy is obtained for intermediate values of $\lambda$, whereas weighted Jaccard generally increases as more weight is placed on active set instability. These results indicate that margin sensitivity and active set instability provide complementary information for adaptive measurement allocation. } 
\label{fig:lambda_ablation} 
\end{figure*}

\section{Consistency of Adaptive Allocation} \label{app:consistency_limit}
\subsection{Proof of Theorem~\ref{thm:allocation_consistency}}
Throughout this section, the notation \[ \hat K^{(r)} \xrightarrow{p} K \] denotes convergence in probability. Specifically, as the number of measurements allocated to each independent kernel entry increases, \[ N_{ij}\to\infty, \qquad i<j, \] we have \[ \lim_{\min_{i<j}N_{ij}\to\infty} \Pr\!\left( \|\hat K^{(r)}-K\|>\varepsilon \right) = 0 \] for every $\varepsilon>0$. 
More generally, for any sequence of estimators $\hat\theta^{(r)}$ and parameter $\theta$, the notation \[ \hat\theta^{(r)} \xrightarrow{p} \theta \] means that the probability of observing a deviation larger than any fixed tolerance $\varepsilon>0$ converges to zero as the measurement budget increases.
\begin{proof} 
By assumption, the kernel estimator is consistent, and therefore \[ \hat K_{ij}^{(r)} \xrightarrow{p} K_{ij} \qquad \text{for all } i<j. \] 
Equivalently, \[ \hat K^{(r)} \xrightarrow{p} K. \] 
Under uniqueness of the SVM dual optimum and the Local Active Set Stability assumption, Assumption~\ref{ass:active}, sufficiently small perturbations of the kernel matrix do not alter the support-vector partition. 
Standard sensitivity results for strictly convex quadratic programs therefore imply continuity of the optimal dual solution with respect to the kernel matrix \cite{bonnans2013perturbation}. 
Consequently, \[ \hat \alpha^{(r)} \xrightarrow{p} \alpha. \] 
The allocation weights are constructed from the estimated kernel matrix and the corresponding estimated dual coefficients. 
Since the allocation rule is continuous by construction, the Continuous Mapping Theorem \cite{wasserman2004all} yields \[ \hat w_{ij}^{(r)} \xrightarrow{p} w_{ij} \qquad \text{for all } i<j. \] 
To establish convergence of the allocation itself, define \[ W = \sum_{k<l} w_{kl}, \qquad \hat W^{(r)} = \sum_{k<l}\hat w_{kl}^{(r)}. \] 
Since each individual weight converges in probability, \[ \hat W^{(r)} \xrightarrow{p} W. \] 
Provided that $W>0$, another application of the Continuous Mapping Theorem gives \[ \frac{\hat w_{ij}^{(r)}} {\hat W^{(r)}} \xrightarrow{p} \frac{w_{ij}} {W}. \] 
Therefore the allocation proportions induced by the adaptive procedure converge to the corresponding oracle Neyman allocation. This establishes asymptotic recovery of the oracle allocation. \end{proof}

The theorem implies that the adaptive procedure asymptotically recovers the oracle measurement-allocation strategy derived from the true kernel matrix and the corresponding optimal SVM solution. Consequently, the adaptive algorithm may be interpreted as a statistically consistent estimator of the oracle allocation.

\subsection{Proof of Corollary~\ref{cor:efficiency}}
\begin{proof} 
Theorem~\ref{thm:allocation_consistency} establishes that \[ \hat w_{ij}^{(r)} \xrightarrow{p} w_{ij} \] for all independent kernel entries. 
Consequently, the allocation proportions induced by the adaptive procedure converge to the corresponding oracle Neyman allocation. 
Since the variance proxy of Eq.~\eqref{eq:margin_min} is a continuous function of the allocation proportions whenever all allocations are strictly positive, the Continuous Mapping Theorem implies \[ \hat V^{(r)} \xrightarrow{p} V^\star. \] 
Dividing by the nonzero constant \(V^\star\) gives \[ \frac{\hat V^{(r)}}{V^\star} \xrightarrow{p} 1, \] which establishes asymptotic oracle efficiency.
\end{proof}

\section{Sensitivity to the Allocation Mixing Parameter}\label{app.ablation_lambda}

To investigate the influence of the allocation mixing parameter, we evaluated the adaptive strategy across values of $\lambda \in [0,1]$ while keeping the total measurement budget fixed. Figure~\ref{fig:lambda_ablation} shows the resulting classifier reconstruction accuracy and support-vector recovery performance. For normalized decision-function RMSE, the best performance is obtained for intermediate values of $\lambda$, with a broad optimum around $\lambda \approx 0.4$. In contrast, weighted Jaccard similarity improves steadily as more emphasis is placed on active set instability. Neither extreme strategy performs best across both metrics: pure margin sensitivity ($\lambda=0$) yields inferior support-vector recovery, while pure active set instability ($\lambda=1$) degrades decision-function accuracy. This behavior supports the central design principle of the proposed method, namely that geometric sensitivity and active set uncertainty capture complementary aspects of kernel importance. The default choice $\lambda=0.5$ lies within a broad high-performing region and does not require precise tuning.

\section{Hyperparameter Selection}\label{sec:hyperparameters}

We summarize the key parameters used in the adaptive measurement procedure.

\begin{itemize}

    \item \textbf{Total measurement budget $N_{\mathrm{tot}}$.}  
    Determined by experimental constraints.

    \item \textbf{Number of adaptive rounds $R$.}  
    Sets the maximum number of refinement iterations. Unless early stopping is triggered, the remaining budget after the pilot stage is distributed uniformly across the rounds.

    \item \textbf{Score weight $\lambda$.}  
    Controls the balance between geometric sensitivity and active set instability
    (Eq.~\eqref{eq:sens_inst}). We use $\lambda = 0.5$.

    \item \textbf{Pilot measurements $m_0$.}  
    Each kernel entry is initialized using $m_0$ measurements, with total cost
    \[
        N_{\mathrm{pilot}} = m_0 \cdot \frac{n(n-1)}{2}.
    \]

    \item \textbf{Stopping threshold $\varepsilon$.}  
    The algorithm stops when $\delta_r < \varepsilon$. We consider values in the range $10^{-2}$–$10^{-1}$.

    \item \textbf{SVM regularization $C$.}  
    Fixed throughout optimization; default values from standard libraries are used.

\end{itemize}

\section{Additional IBM Hardware Analysis}\label{app:ibm}
\begin{figure*}[t] 
\centering 
\begin{subfigure}[t]{0.34\textwidth} \centering \includegraphics[width=\textwidth]{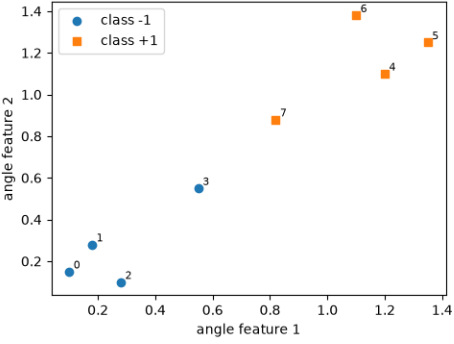} \caption{The toy dataset.} \label{fig:appendix_ibm_uniform_alloc} \end{subfigure} 
\hfill 
\begin{subfigure}[t]{0.3\textwidth} \centering \includegraphics[width=\textwidth]{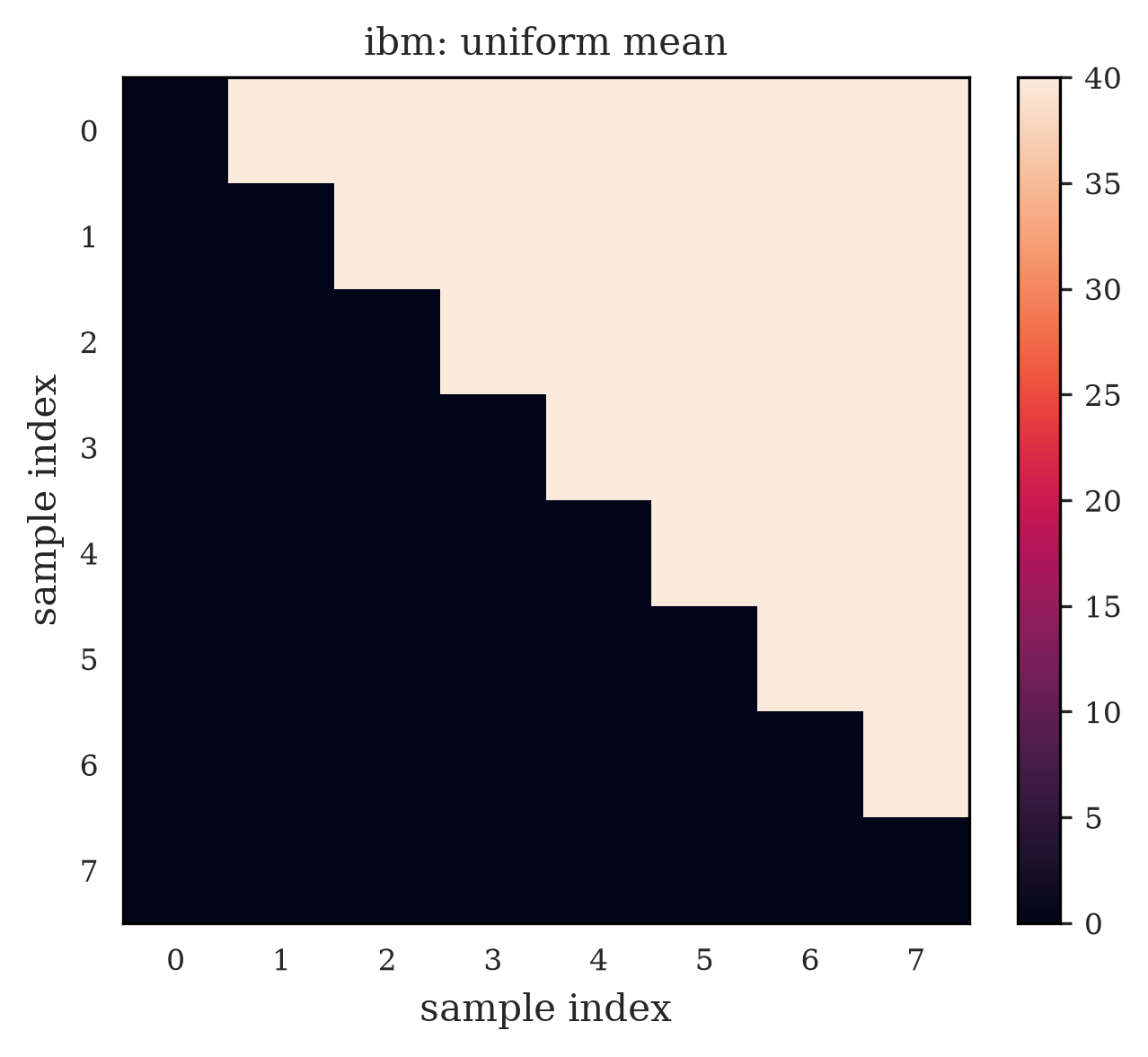} \caption{Mean uniform allocation.} \label{fig:appendix_ibm_adaptive_alloc} \end{subfigure} 
\hfill 
\begin{subfigure}[t]{0.3\textwidth} \centering \includegraphics[width=\textwidth]{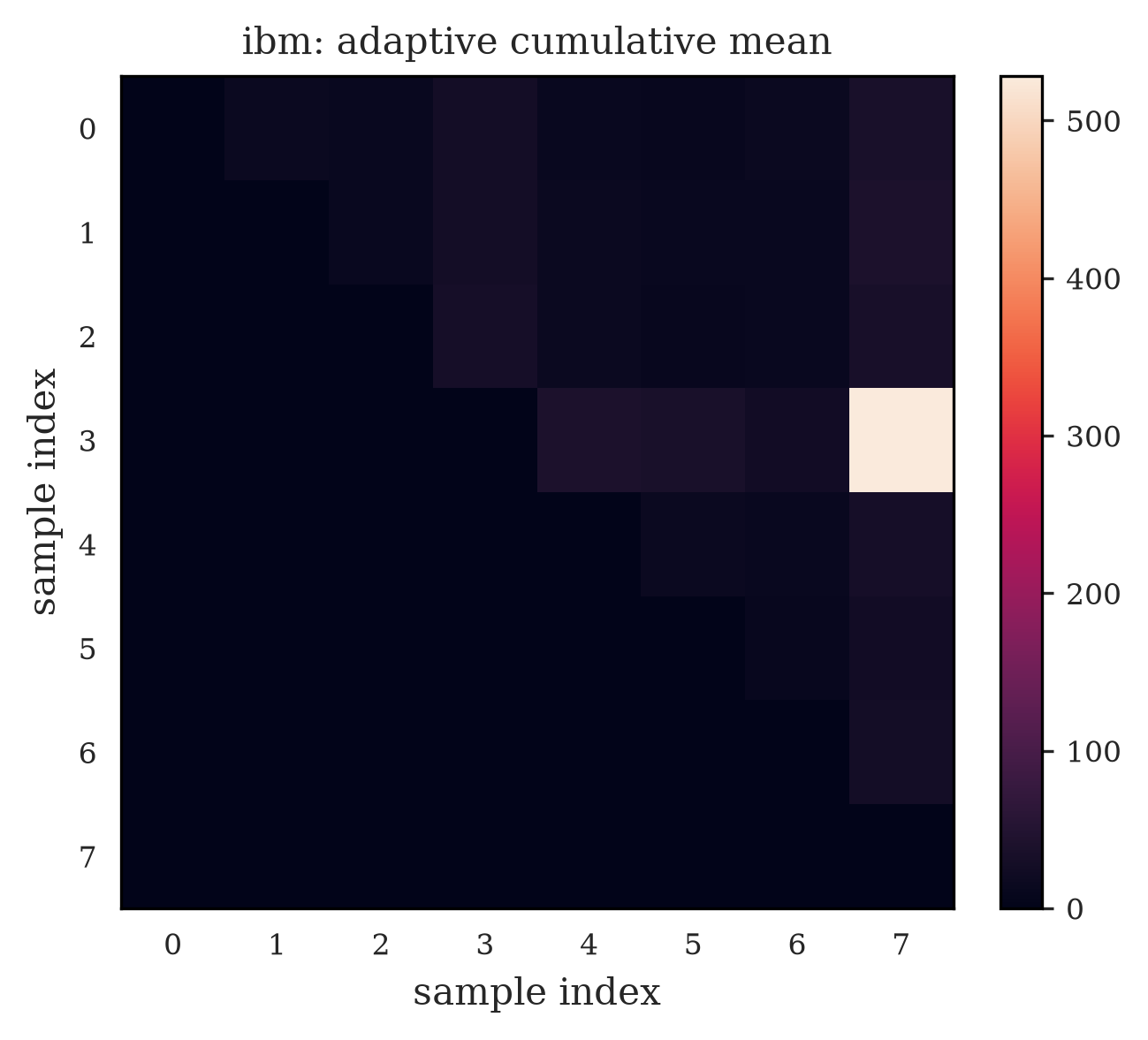} \caption{Mean adaptive allocation.} \label{fig:appendix_ibm_adaptive_minus_uniform} \end{subfigure} 
\caption{Additional hardware-allocation analysis for the IBM Quantum experiment. (a) Synthetic dataset used for hardware validation. Samples are labeled according to their index in the kernel matrix. (b) Mean uniform allocation across the 15 hardware runs. (c) Mean cumulative adaptive allocation across the same runs. The adaptive strategy concentrates measurements on a small subset of kernel entries rather than distributing them uniformly across the Gram matrix. In particular, a large fraction of the additional measurement budget is directed toward kernel entry $(3,7)$, corresponding to the pair of points closest to the decision boundary and identified as support vectors by the reference true-kernel SVM. This behavior is consistent with the sensitivity analysis of Sec.~\ref{sec:theory_sensitivity}, which predicts that kernel entries associated with large dual coefficients have the greatest influence on the margin and decision function. The allocation patterns therefore provide a mechanistic explanation for the improvements in support-vector recovery and classifier estimation observed in the hardware experiments.} \label{fig:appendix_ibm_allocations} 
\end{figure*}

The IBM Quantum hardware experiments reported in Sec.~\ref{sec:ibm_hardware_validation} were performed on the \texttt{IBM Marrakesh} backend using \texttt{Qiskit Runtime}. 
The objective of these experiments was not to demonstrate large-scale quantum advantage, but rather to validate the adaptive measurement allocation strategy under realistic hardware conditions, including finite-shot noise, device calibration variability, and execution-time constraints. 

The experiments used the synthetic binary classification dataset shown in Fig.~\ref{fig:appendix_ibm_allocations}. 
The dataset consists of $n=8$ training samples and yields a reference true-kernel SVM with two support vectors (samples $3$ and $7$). 
Throughout the experiments, the SVM regularization parameter was fixed to $C=10$. 

Kernel entries were estimated using a shot-based fidelity-kernel construction based on the compute--uncompute procedure of Ref.~\cite{havlivcek2019supervised}. 
The quantum feature map consisted of two repetitions of angle embedding followed by nearest-neighbour CNOT entangling gates. 
This choice produced a non-trivial kernel structure while keeping circuit depth sufficiently small for repeated hardware execution. For each experiment, a fixed total budget corresponding to $40$ shots per independent off-diagonal kernel entry was used. 
Adaptive allocation was initialized with a pilot allocation of $8$ shots per entry and followed by three adaptive allocation rounds using the multinomial allocation rule with $\lambda=0.5$.

The complete adaptive-versus-uniform comparison was repeated over 15 independent hardware runs. Each run corresponds to a fully independent execution of the adaptive measurement process on the quantum device, with fresh kernel measurements collected from hardware at every stage of the allocation procedure. In particular, the reported results do not rely on classical resampling, bootstrapping, or post-hoc simulation of measurement outcomes from a fixed dataset. Instead, each run constitutes a genuine end-to-end realization of the adaptive learning pipeline under independently observed hardware noise and finite-shot sampling fluctuations. Consequently, the observed performance differences reflect the behavior of the allocation strategy under realistic quantum hardware conditions rather than synthetic reconstructions of the measurement process.
The small problem size was chosen deliberately. The goal of the hardware validation was to isolate the effect of measurement allocation under realistic quantum noise while allowing repeated executions under a fixed resource budget. The selected dataset is sufficiently structured to produce a non-trivial support-vector configuration while remaining compatible with repeated hardware evaluation.

To minimize the impact of device calibration drift and temporal fluctuations, adaptive and uniform experiments were executed in a paired fashion. For each hardware run, the adaptive and uniform allocations were evaluated sequentially within the same execution session and under the same backend calibration state whenever possible. Consequently, the reported comparisons primarily reflect differences in measurement allocation strategy rather than long-term variations in hardware performance.

Figure~\ref{fig:appendix_ibm_allocations} shows the average measurement allocations observed across the IBM hardware experiments. While the uniform baseline distributes shots evenly across all kernel entries, the adaptive strategy concentrates measurements on a small subset of entries and reduces allocation elsewhere. This behavior is consistent with the objective of prioritizing kernel entries that have the greatest influence on the resulting SVM classifier.

Table~\ref{tab:appendix_ibm_run_by_run} reports the individual outcomes of the 15 paired IBM Quantum hardware runs used in the hardware validation study. 
For each independent run, the table lists the final performance obtained with the adaptive allocation strategy and the corresponding uniform-allocation baseline executed under the same hardware conditions. 

The run-level results provide a more detailed view of the aggregate statistics reported in Sec.~\ref{sec:ibm_hardware_validation}. 
While uniform allocation consistently achieves lower full-kernel reconstruction error, adaptive allocation generally improves the quantities most relevant to the downstream SVM classifier, including support-vector reconstruction, weighted support-vector agreement, and decision-function estimation. 
Importantly, these improvements are not driven by a small number of exceptional runs but are observed repeatedly across the hardware executions. 

The table also illustrates the natural run-to-run variability arising from finite-shot sampling and quantum hardware noise. 
Nevertheless, the qualitative adaptive-versus-uniform trade-off remains remarkably stable: uniform allocation tends to provide the most accurate approximation of the complete Gram matrix, whereas adaptive allocation preferentially improves the kernel entries that determine the learned classifier.

\begin{table*}[t]
\centering
\caption{
Run-by-run IBM hardware results for the final adaptive allocation and the uniform baseline. Each entry reports adaptive/uniform performance for the same paired hardware run. Lower values are better for RMSE and margin-error metrics, whereas higher values are better for weighted Jaccard.
}
\label{tab:appendix_ibm_run_by_run}
\resizebox{\textwidth}{!}{%
\begin{tabular}{lccccc}
\toprule
Run & $\mathrm{RMSE}(K)$ & $\mathrm{RMSE}(K_{\mathrm{SV}})$ & $J_{\mathrm{SV}}^{w}$ & Margin err. & Decision RMSE \\
\midrule
run1 & $0.0605/0.0584$ & $0.0111/0.0338$ & $0.9209/0.7583$ & $0.2732/1.0136$ & $0.1611/0.5976$ \\
run2 & $0.1140/0.0741$ & $0.0064/0.0054$ & $0.9561/0.9133$ & $0.1424/0.3194$ & $0.0799/0.2508$ \\
run3 & $0.0595/0.0619$ & $0.0218/0.0270$ & $0.8408/0.7660$ & $0.5572/1.0693$ & $0.3358/0.6443$ \\
run4 & $0.0783/0.0521$ & $0.0082/0.0163$ & $0.9294/0.8679$ & $0.2563/0.4794$ & $0.1582/0.2959$ \\
run5 & $0.0564/0.0484$ & $0.0206/0.0252$ & $0.8493/0.8216$ & $0.5259/0.6225$ & $0.3220/0.3812$ \\
run6 & $0.0905/0.0552$ & $0.0060/0.0115$ & $0.9559/0.9185$ & $0.1456/0.2690$ & $0.0889/0.1643$ \\
run7 & $0.0850/0.0434$ & $0.0006/0.0498$ & $0.9951/0.7185$ & $0.0163/0.9393$ & $0.0097/0.5581$ \\
run8 & $0.0767/0.0497$ & $0.0086/0.0782$ & $0.9398/0.5940$ & $0.1956/1.0851$ & $0.1157/0.6386$ \\
run9 & $0.0607/0.0525$ & $0.0180/0.0157$ & $0.8033/0.8710$ & $0.1657/0.5046$ & $0.1873/0.3112$ \\
run10 & $0.0883/0.0461$ & $0.0082/0.0857$ & $0.9392/0.5799$ & $0.2033/1.3004$ & $0.1215/0.7773$ \\
run11 & $0.0941/0.0685$ & $0.0115/0.0168$ & $0.9142/0.8635$ & $0.2913/0.5362$ & $0.1728/0.3182$ \\
run12 & $0.0640/0.0491$ & $0.0056/0.0357$ & $0.9570/0.7123$ & $0.1452/1.3633$ & $0.0884/0.8303$ \\
run13 & $0.0700/0.0514$ & $0.0084/0.0254$ & $0.9325/0.8207$ & $0.2355/0.6258$ & $0.1452/0.3858$ \\
run14 & $0.0868/0.0633$ & $0.0117/0.0342$ & $0.9166/0.7562$ & $0.2890/1.0238$ & $0.1621/0.5743$ \\
run15 & $0.0715/0.0600$ & $0.0211/0.0104$ & $0.8433/0.9163$ & $0.5535/0.2956$ & $0.3402/0.1817$ \\
\bottomrule
\end{tabular}%
}
\end{table*}
\end{appendices}
\end{document}